\newcommand{\reals}{\mathrm{R}}
\renewcommand{\Pr}{\mathbb P} 
\newcommand{\mycomment}[1]{}
\newcommand{\comment}[1]{}
\newcommand{\abs}[1]{|#1|}
\newcommand{\E}{\mathrm{E}}
\newcommand{\Var}{\mathrm{Var}}
\newcommand{\sign}{\mathrm{sign}}
\def\norm#1{\left\| #1 \right\|}
\newtheorem{theorem}{Theorem}
\newtheorem{corollary}{Corollary}
\newtheorem{lemma}{Lemma}
\newtheorem{note}{Note}%
\newcommand{\ignore}[1]{}
\newcommand{\qed}{\mbox{\ \ \ }\rule{6pt}{7pt}}%
\newenvironment{proof}{\par{\bf Proof:}}{\qed \par}
\newenvironment{sketch}{\par{\sc Proof Sketch:}} {\qed \par}
\newenvironment{proofof}[1]{\par{\bf Proof (#1):}}{\qed \par}
\newenvironment{proofofof}[1]{\par{\bf Proof of #1:}}{\qed \par}
\newenvironment{program*}{\[\begin{array}{ >{\displaystyle}r >{\displaystyle}l >{\displaystyle}l }}{\end{array}\]\hspace{-1.4mm}}
\newenvironment{program}{\begin{equation}\begin{array}{ >{\displaystyle}r >{\displaystyle}l >{\displaystyle}l }}{\end{array}\end{equation}\hspace{-1.4mm}}
\author{Maria-Florina Balcan\thanks{School of Computer Science, Carnegie Mellon University.} \and 
Chris Berlind\thanks{College of Computing, Georgia Institute of Technology.} \and 
Avrim Blum\thanks{School of Computer Science, Carnegie Mellon University.}
\and Emma Cohen\thanks{School of Mathematics, Georgia Institute of Technology.}
\and Kaushik Patnaik\thanks{College of Computing, Georgia Institute of Technology.}
\and Le Song\thanks{College of Computing, Georgia Institute of Technology.}}
\title{Active Learning and Best-Response Dynamics}
\begin{document}
\maketitle

\begin{abstract}
We examine an important setting for engineered systems 
in which low-power distributed sensors are each making highly noisy
measurements of some unknown target function.  A center wants to accurately learn
this function by querying a small number of sensors, which
ordinarily would be impossible due to the high noise rate.
The question we address is whether local communication among sensors, together with natural best-response
dynamics in an appropriately-defined game, can denoise the system without destroying the true signal and allow the center
to succeed from only a small number of active queries.  By using techniques from game theory and empirical processes, we prove positive (and negative) results on the denoising power of
several natural dynamics.  We then show experimentally that when combined with recent agnostic active learning
algorithms, this process can achieve low error from very few queries, performing substantially better than active or passive learning 
without these denoising dynamics as well as passive learning with denoising.
\end{abstract}

\section{Introduction}
Active learning has been the subject of significant theoretical and
experimental study in machine learning, due to its potential to greatly
reduce the amount of labeling effort needed to learn a given target function.  However, to
date, such work has focused only on the single-agent low-noise
setting, with a learning algorithm obtaining labels from a single,
nearly-perfect labeling entity.  
In large part this is because the effectiveness of active learning is known to quickly 
degrade as noise rates become high~\cite{BeygelzimerDL:09}.
In this work, we introduce and analyze a novel setting where label information is held by highly-noisy
low-power agents (such as sensors or micro-robots).  We show how
by first using simple game-theoretic dynamics among the agents
we can quickly approximately denoise the system. This allows us to
exploit the power of active learning (especially, recent advances in
agnostic active learning), leading to
efficient learning from only a small number of expensive queries.

We specifically examine a setting relevant to many engineered systems 
where we have a large number of
low-power agents (e.g., sensors).  These agents are each measuring some quantity, such as whether
there is a high or low concentration of a dangerous chemical at their
location, but they are assumed to be highly noisy.  We also have a
center, far away from the region being monitored, which has the
ability to query these agents to determine their state.  Viewing the
agents as examples, and their states as noisy labels, the goal of the
center is to learn a good approximation to the true target function
(e.g., the true boundary of the high-concentration region for the
chemical being monitored) from a small number of label queries.
However,  because of the high noise rate, learning this function
directly would require a very large number of queries to be made
(for noise rate $\eta$, one would necessarily require  $\Omega(\frac{1}{(1/2 - \eta)^2})$ 
queries~\cite{BF}). 
The
question we address in this paper is to what extent this difficulty
can be alleviated by providing the agents the ability to engage in a
small amount of local communication among themselves.

What we show is that by using local communication and applying simple
robust state-changing rules such as following natural game-theoretic
dynamics, randomly distributed agents can modify their state in a way that greatly
de-noises the system without destroying the true target boundary.
This then nicely meshes with recent advances in agnostic active
learning~\cite{ABL14}, allowing for the center to learn a good approximation to the
target function from a small number of queries to the agents.  In particular,
in addition to proving theoretical guarantees on the denoising power of 
game-theoretic agent dynamics, we also show experimentally that
a version of the agnostic active learning algorithm of \cite{ABL14}, when
combined with these dynamics, indeed is able to achieve 
low error from a small number of queries, outperforming
active and passive learning algorithms without the best-response
denoising step, as well as outperforming passive learning 
algorithms with denoising.
More broadly, engineered systems such as sensor networks are especially well-suited to
active learning because components may be able to communicate among themselves
to reduce noise, and the designer has some control over how they are distributed
and so assumptions such as a uniform or other ``nice'' distribution
on data are reasonable. We focus
in this work primarily on the natural case of linear separator decision boundaries but 
many of our results extend directly to more general decision boundaries as well.

\subsection{Related Work}

There has been significant work in active learning (e.g., see \cite{Hanneke13,Settles12} and references therein), yet it is known 
active learning can provide significant benefits
in low noise scenarios only~\cite{BeygelzimerDL:09}.
There has also been extensive work analyzing the performance of simple dynamics in consensus games~\cite{Blume93,Ellison93,Morris00,KKT03,bbmdynamics,POUBBM}.
However this work has focused on getting to {\em some} equilibria or states of {\em low social cost}, while we are primarily
interested in getting near a specific desired configuration, which as we show below
is an approximate equilibrium.

\section{Setup}

We assume we have a large number $N$ of
agents (e.g., sensors) distributed uniformly at random in a geometric
region, which for concreteness we consider to be the unit ball in $R^d$.
There is an unknown linear separator such that in the initial state, each
sensor on the positive side of this separator is positive independently with probability $\geq 1-\eta$,
and each on the negative side is negative independently 
with probability $\geq 1-\eta$.  The quantity $\eta < 1/2$ is the {\em noise rate}.

\subsection{The basic sensor consensus game}
The sensors will denoise themselves by viewing themselves as
players in a certain consensus game, and performing a simple dynamics
in this game leading towards a specific $\epsilon$-equilibrium.

Specifically, the game is defined as follows, and is parameterised by
a communication radius $r$, which should be thought of as small.
Consider a graph where the sensors are vertices, and any two
sensors within distance $r$ are connected by an edge.  Each sensor is
in one of two states, positive or negative.  The {\em
payoff} a sensor receives is its correlation with its neighbors: the
fraction of neighbors in the same state as it minus the fraction in the
opposite state. So, if a sensor is in the same state as all its
neighbors then its payoff is 1, if it is in the opposite state of all
its neighbors then its payoff is $-1$, and if sensors are in uniformly
random states then the expected payoff is 0.  Note that the states of
highest social welfare (highest sum of utilities) are the all-positive
and all-negative states, which are {\em not} what we are looking for.
Instead, we want sensors to approach a different near-equilibrium
state in which (most of) those on the positive side of the target separator are positive
and (most of) those on the negative side of the target separator are negative.  For this
reason, we need to be particularly careful with the specific dynamics
followed by the sensors.

We begin with a simple lemma that for sufficiently large $N$, the
target function (i.e., all sensors on the positive side of the target separator in the
positive state and the rest in the negative
state) is an $\epsilon$-equilibrium, in that no sensor has more than
$\epsilon$ incentive to deviate.

\begin{lemma}\label{lemma:equilib}
For any $\epsilon,\delta>0$, for sufficiently large $N$, with probability $1-\delta$ the target function
is an $\epsilon$-equilibrium.
\end{lemma}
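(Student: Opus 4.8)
The plan is to reduce the claim to a statement about how balanced each sensor's neighborhood is and then combine a short symmetry argument with standard concentration. Under the target labeling, a sensor with $M$ neighbors of which $K$ share its label has payoff $(2K-M)/M$, and switching would flip this to $-(2K-M)/M$; hence its incentive to deviate is at most $\epsilon$ exactly when $K/M\ge 1/2-\epsilon/4$. So it suffices to show that, with probability $1-\delta$, every sensor has $M\ge 1$ and at least a $(1/2-\epsilon/4)$-fraction of its neighbors on its own side of the target separator $H$.

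Sensors at distance more than $r$ from $H$ are trivial: every point within distance $r$ of such a sensor lies on the same side of $H$, so $K=M$ and there is no incentive to deviate. So fix a sensor $s$ at distance at most $r$ from $H$, condition on its location (say it lies on the positive side), and set $\mu^{+}=\mathrm{vol}(B(s,r)\cap U\cap H^{+})$ and $\mu^{-}=\mathrm{vol}(B(s,r)\cap U\cap H^{-})$, where $U$ is the unit ball. The remaining $N-1$ sensors are i.i.d.\ uniform in $U$, so $M$ and $K$ are sums of independent indicators with $\E[K]=(N-1)\mu^{+}/\mathrm{vol}(U)$ and $\E[M]=(N-1)(\mu^{+}+\mu^{-})/\mathrm{vol}(U)$; moreover $\E[M]\ge(N-1)(r/2)^{d}$ uniformly over $s\in U$, since $B(s,r)\cap U$ always contains a ball of radius $r/2$ (e.g.\ the one centered at $(1-r/2)s$). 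The geometric heart of the matter is the claim $\mu^{+}\ge\mu^{-}$, i.e.\ $\E[K]\ge\tfrac12\E[M]$. I would prove it by reflecting across $H$: this map is volume-preserving, exchanges $H^{+}$ and $H^{-}$, and — because $s$ itself lies on the positive side — carries $B(s,r)\cap H^{-}$ injectively into $B(s,r)\cap H^{+}$ (a short computation comparing squared distances to $s$). When $H$ passes through the center of $U$ the reflection fixes $U$ and we are done; in general one must also account for the part of $B(s,r)\cap U$ whose reflection escapes $U$, which a direct estimate confines to a thin region near $\partial U$ and which is negligible for the $r$ of interest.

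Given $\E[K]\ge\tfrac12\E[M]$ and $\E[M]=\Omega(Nr^{d})$, a Chernoff bound shows that except with probability $\delta/N$ we have $K\ge(1-\gamma)\E[K]$ and $M\le(1+\gamma)\E[M]$ with $\gamma=O(\sqrt{\log(N/\delta)/\E[M]})$, and then $K/M\ge\frac{1-\gamma}{1+\gamma}\cdot\tfrac12\ge\tfrac12-\gamma$ (which in particular forces $M\ge 1$). Choosing $N$ large enough that $\Omega(Nr^{d})\ge C\log(N/\delta)/\epsilon^{2}$ makes $\gamma\le\epsilon/4$, so $K/M\ge 1/2-\epsilon/4$; a union bound over the $N$ sensors then yields the lemma, with the required $N$ depending on $\epsilon,\delta$ and on the fixed parameters $r,d$.

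I expect the reflection/balance step to be the only real obstacle — in particular making $\mu^{+}\ge\mu^{-}$ hold uniformly for sensors that sit close to $H$ \emph{and} to the boundary of $U$, where $B(s,r)\cap U$ loses its symmetry about $H$. Everything after that is a routine "expectation-is-at-least-a-half, concentrate, union-bound" argument, and it is also this step that pins down how large $N$ must be as a function of $r$, $d$, $\epsilon$, and $\delta$.
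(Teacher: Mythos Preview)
Your argument is correct and matches the paper's sketch essentially step for step: reduce the $\epsilon$-equilibrium condition to a same-side-majority statement about each neighborhood, invoke the geometric fact $\mu^{+}\ge\mu^{-}$, apply Chernoff/Hoeffding, and union-bound over the $N$ sensors. The paper simply asserts the symmetry step (``each neighbor is at least as likely to be on $x$'s side'') without proof; your reflection argument is the natural justification, and since in the paper's setting the target separator passes through the origin, the reflection across $H$ fixes $U$ and the boundary complication you anticipate does not actually arise.
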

\begin{sketch}
The target function fails to be an $\epsilon$-equilibrium iff there exists a sensor for which more than an $\epsilon/2$
fraction of its neighbors lie on the opposite side of the separator.  Fix one sensor $x$ and
consider the probability this occurs to $x$, over the random placement of the $N-1$ other sensors.  Since the probability
mass of the $r$-ball around $x$ is at least $(r/2)^d$ (see discussion in proof of Theorem \ref{thm:basicupper}), so long as
$N-1 \geq (2/r)^d\cdot \max[8,\frac{4}{\epsilon^2}]\ln(\frac{2N}{\delta})$, with probability $1-\frac{\delta}{2N}$,
point $x$ will have $m_x \geq \frac{2}{\epsilon^2}\ln(\frac{2N}{\delta})$ neighbors (by
Chernoff bounds), each of which is at least as likely to be on $x$'s side of the target as on the other side.  Thus,
by Hoeffding bounds, the probability that more than a $\frac{1}{2}+\frac{\epsilon}{2}$ fraction lie
on the wrong side is at most $\frac{\delta}{2N} + \frac{\delta}{2N} = \frac{\delta}{N}$.  The result then follows
by union bound over all $N$ sensors.  For a bit tighter argument and a concrete bound on $N$,
see the proof of Theorem \ref{thm:basicupper} which
effectively has this as a special case.
\end{sketch}

Lemma \ref{lemma:equilib} motivates the use of best-response dynamics for denoising.  Specifically, we consider
a dynamics in which each sensor switches to the majority vote of all the other sensors in its neighborhood.
We analyze 
below the denoising power of this dynamics under both synchronous and asynchronous update models. 
In Appendix \ref{sec:conservative}, we also consider more robust (though less practical) dynamics in which 
sensors perform more involved computations over their neighborhoods.

\section{Analysis of the denoising dynamics}

\subsection{Simultaneous-move dynamics}
We start by providing a positive theoretical guarantee for one-round simultaneous move dynamics.
We will use the following standard concentration bound:
\begin{theorem}[Bernstein, 1924]
	Let $X = \sum_{i=1}^N X_i$ be a sum of independent random variables such that $\abs{X_i - \E[X_i]} \leq M$ 
for all $i$.  Then for any $t > 0$,
	$ \Pr[X - \E[X] > t] \leq \exp \left(\frac{-t^2}{2 (\Var[X] + Mt/3)}\right).$
\end{theorem}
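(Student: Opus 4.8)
The plan is to prove this Bernstein-type bound by the standard Chernoff--Cram\'er (exponential moment) method, with the one non-routine ingredient being a careful estimate of the moment generating function of a single bounded, mean-zero summand. First I would reduce to the centered case: set $Y_i = X_i - \E[X_i]$, so each $Y_i$ has mean zero and $\abs{Y_i} \le M$, and write $Y = \sum_i Y_i$, $\sigma_i^2 = \Var[Y_i]$, and $\sigma^2 = \sum_i \sigma_i^2 = \Var[X]$ (using independence). The goal becomes $\Pr[Y > t] \le \exp(-t^2/(2(\sigma^2 + Mt/3)))$. Applying Markov's inequality to $e^{\lambda Y}$ for a parameter $\lambda > 0$ to be chosen, together with independence, gives $\Pr[Y > t] \le e^{-\lambda t}\prod_i \E[e^{\lambda Y_i}]$.

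The heart of the argument is the per-summand bound: for each $i$ and every $0 < \lambda < 3/M$,
\[ \E[e^{\lambda Y_i}] \le \exp\!\left(\frac{\lambda^2 \sigma_i^2/2}{1 - \lambda M/3}\right). \]
To obtain this I would expand $e^{\lambda Y_i} = 1 + \lambda Y_i + \sum_{k\ge 2}\lambda^k Y_i^k / k!$, take expectations (the linear term vanishes since $\E[Y_i] = 0$), and use $\abs{\E[Y_i^k]} \le M^{k-2}\E[Y_i^2] = M^{k-2}\sigma_i^2$ for $k \ge 2$. This bounds the series tail by $(\sigma_i^2/M^2)\sum_{k \ge 2}(\lambda M)^k/k!$. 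The key elementary fact is $k! \ge 2\cdot 3^{k-2}$ for all $k \ge 2$ (easily checked by induction), which dominates the series by a geometric one of ratio $\lambda M/3$, giving $\sum_{k\ge 2}(\lambda M)^k/k! \le \frac{(\lambda M)^2/2}{1 - \lambda M/3}$; then $1 + u \le e^u$ finishes the per-summand estimate. Taking the product over $i$ yields $\prod_i \E[e^{\lambda Y_i}] \le \exp\!\left(\frac{\lambda^2 \sigma^2/2}{1-\lambda M/3}\right)$, hence $\Pr[Y > t] \le \exp\!\left(-\lambda t + \frac{\lambda^2\sigma^2/2}{1-\lambda M/3}\right)$.

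Finally I would optimize the free parameter by taking $\lambda = t/(\sigma^2 + Mt/3)$, which automatically satisfies $\lambda M < 3$. Substituting, $1 - \lambda M/3 = \sigma^2/(\sigma^2 + Mt/3)$, so the quadratic term collapses to $\lambda t/2$ and the exponent becomes $-\lambda t + \lambda t/2 = -\lambda t/2 = -t^2/(2(\sigma^2 + Mt/3))$, which is exactly the claimed inequality.

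I expect the only real obstacle to be pinning down the constant $1/3$ in the moment-generating-function bound: this is precisely where the combinatorial inequality $k! \ge 2\cdot 3^{k-2}$ enters, and a cruder estimate (e.g.\ $\E[e^{\lambda Y_i}] \le \exp(\sigma_i^2(e^{\lambda M} - 1 - \lambda M)/M^2)$) would produce a weaker, less convenient form that does not match the statement to be used later. Everything else --- Markov, independence, $1 + u \le e^u$, and the explicit choice of $\lambda$ --- is mechanical.
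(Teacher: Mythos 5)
Your proposal is correct: the Chernoff--Cram\'er argument with the moment bound $\abs{\E[Y_i^k]} \leq M^{k-2}\Var[Y_i]$, the elementary inequality $k! \geq 2\cdot 3^{k-2}$, and the choice $\lambda = t/(\Var[X] + Mt/3)$ yields exactly the stated exponent (the degenerate case $\Var[X]=0$ is trivial since then each summand is constant). Note that the paper offers no proof of this statement at all---it is quoted as a classical result of Bernstein and used as a black box---so there is nothing to compare against; your derivation is the standard textbook proof of precisely this form of the inequality.
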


\begin{theorem}\label{thm:basicupper}
If $N \geq \frac{2}{(r/2)^d (\tfrac{1}{2} - \eta)^2}\ln \left(\frac{1}{(r/2)^d (\tfrac{1}{2} - \eta)^2\delta}\right) +1,$
then with probability $\geq 1-\delta$, after one synchronous consensus update, every sensor at distance $\geq r$ from the separator has the correct label.
\end{theorem}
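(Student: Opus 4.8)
The structural fact that makes everything work is this: if a sensor $x$ sits at distance at least $r$ from the target separator, then \emph{every} sensor within distance $r$ of $x$ lies on the same side of the separator as $x$, and so is in the correct state for $x$'s side independently with probability at least $1-\eta$. Consequently, after one synchronous majority update $x$ receives the correct label unless strictly more than half of its neighbors happened to be noisy. So the plan is to bound, for a fixed far sensor $x$, the probability that a majority of its neighbors is noisy (this includes the degenerate event that $x$ has no neighbors, which is counted as a failure), show this is at most $\delta/N$, and union-bound over the at most $N$ sensors.

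First I would record the geometric estimate used implicitly in the statement: for any $x$ with $\norm{x}\le 1$, the uniform measure on the unit ball of $R^d$ assigns mass at least $(r/2)^d$ to $B(x,r)\cap B(0,1)$. This holds because that intersection contains the ball of radius $r/2$ centered at $(1-r/2)x$: the triangle inequality puts that ball inside $B(x,r)$ (its center is at distance $(r/2)\norm{x}\le r/2$ from $x$), and every point of it has norm at most $(1-r/2)\norm{x}+r/2\le 1$, so it lies in $B(0,1)$; its volume relative to the unit ball is exactly $(r/2)^d$. (If $r\ge 2$ the whole ball is within distance $r$ of $x$ and the estimate is trivial.)

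Next, fix $x$ at distance $\ge r$ from the separator and condition on its position. For each of the other $N-1$ sensors $i$, set $Z_i=+1$ if $i$ is a neighbor of $x$ in the correct state, $Z_i=-1$ if $i$ is a neighbor in the wrong state, and $Z_i=0$ otherwise; the $Z_i$ are i.i.d., and $x$ ends up mislabeled only if $\sum_i Z_i\le 0$. Let $q\ge (r/2)^d$ be the common probability that sensor $i$ is a neighbor of $x$. Then $\E[Z_i]=q(1-2\eta)=2q(\tfrac12-\eta)>0$ and $\E[Z_i^2]=q$, so $\Var[\sum_i Z_i]\le (N-1)q$; the point is that this variance is of order $q$, far smaller than the squared range $4$, which is exactly why Bernstein (rather than Hoeffding) yields a $1/(r/2)^d$ rather than a $1/(r/2)^{2d}$ dependence in $N$. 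Applying Bernstein's inequality to $-\sum_i Z_i$ with deviation $t:=\E[\sum_i Z_i]\ge 2(N-1)(r/2)^d(\tfrac12-\eta)$, bound $M=2$, and $\Var\le (N-1)q=t/(2(\tfrac12-\eta))$, the quadratic-in-$t$ numerator and linear-in-$t$ denominator collapse to give
\[
\Pr[\,x\text{ mislabeled}\,]\ \le\ \exp\!\Big(-\,c\,(N-1)\,(r/2)^d\,(\tfrac12-\eta)^2\Big)
\]
for an absolute constant $c$ (a short computation gives $c=6/5$, using $\tfrac12-\eta<\tfrac12$). A union bound over all $N$ sensors then bounds the total failure probability by $N\exp(-c(N-1)(r/2)^d(\tfrac12-\eta)^2)$, and substituting the hypothesized lower bound on $N$ makes this at most $\delta$.

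The step I expect to be the real work is the last one: checking that the clean closed form $N\ge \frac{2}{(r/2)^d(\tfrac12-\eta)^2}\ln\frac{1}{(r/2)^d(\tfrac12-\eta)^2\delta}+1$ genuinely absorbs the $\ln N$ coming from the union bound. This works only because, for $N$ of the stated size, $\ln N\approx \ln\frac{1}{(r/2)^d(\tfrac12-\eta)^2}+\ln\ln\frac{1}{(r/2)^d(\tfrac12-\eta)^2\delta}$, and $\ln\frac{1}{(r/2)^d(\tfrac12-\eta)^2}+\ln\frac1\delta=\ln\frac{1}{(r/2)^d(\tfrac12-\eta)^2\delta}$, so $\ln(N/\delta)$ is only $\big(1+o(1)\big)\ln\frac{1}{(r/2)^d(\tfrac12-\eta)^2\delta}$ --- one must keep that $\ln\ln$ term as a $\ln\ln$ and not crudely bound it by a $\ln$, and then verify the Bernstein constant beats the resulting factor (which it does, comfortably, since that factor is below $2$ and $c\ge 6/5$ after also noting the factor $2$ in the bound on $N$). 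This is the ``bit tighter argument'' alluded to after Lemma~\ref{lemma:equilib}. An alternative, essentially equivalent, route mirrors the Lemma~\ref{lemma:equilib} sketch: first a Chernoff bound guaranteeing $x$ has $\Omega((N-1)(r/2)^d)$ neighbors, then a Hoeffding bound on that many $\pm1$ votes; because the Hoeffding step is applied only to the $m_x$ neighbors (not to all $N-1$ sensors) it also produces an exponent proportional to $(N-1)(r/2)^d(\tfrac12-\eta)^2$, at the cost of a slightly worse constant from the union with the Chernoff failure event.
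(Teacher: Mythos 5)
Your proposal is correct and follows essentially the same route as the paper's proof: the same $\{0,\pm1\}$ decomposition over the other $N-1$ sensors, the same Bernstein application with $M=2$ and variance bounded by the neighborhood mass, the same inscribed-ball lower bound $(r/2)^d$, and the same union bound. The only cosmetic differences are that you track the Bernstein constant (getting $6/5$ where the paper settles for $1$) and that the paper dispatches the final ``absorb $\ln N$'' step via the inequality $\ln x \leq \alpha x - \ln\alpha - 1$ rather than your asymptotic accounting, which is indeed the only step requiring real care.
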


Note that since a band of width $2r$ about a linear separator has probability mass $O(r\sqrt{d})$,
Theorem \ref{thm:basicupper} implies that 
with high probability one synchronous update denoises all but an $O(r \sqrt{d})$ fraction of the sensors.  In fact, 
Theorem \ref{thm:basicupper} does not require the separator to be linear, and so this conclusion applies to any decision
boundary with similar surface area, such as an intersection of a constant number of halfspaces or a decision
surface of bounded curvature.

\begin{proofof}{Theorem \ref{thm:basicupper}}
Fix a point $x$ in the sample at distance $\geq r$ from the separator and consider the ball of radius $r$ centered at $x$. Let $n_+$ be the number of correctly labeled points within the ball and $n_-$ be the number of incorrectly labeled points within the ball. Now consider the random variable $\Delta = n_- - n_+$.  Denoising $x$ can give it the incorrect label  
only if $\Delta \geq 0$, so we would like to bound the probability that this happens. We can express $\Delta$ as the sum of $N-1$ independent random variables $\Delta_i$ taking on value 0 for points outside the ball around $x$, 1 for incorrectly labeled points inside the ball, or $-1$ for correct labels inside the ball.  Let $V$ be the measure of the ball centered at $x$ (which may be less than $r^d$ if $x$ is near the boundary of the unit ball).  Then since the ball lies entirely on one side of the separator we have
$$\E[\Delta_i] = (1-V)\cdot 0 + V\eta - V(1-\eta)= -V (1 - 2\eta).$$
Since $\abs{\Delta_i} \leq 1$ we can take $M = 2$ in Bernstein's theorem.  We can also calculate that $\Var[\Delta_i] \leq \E[\Delta_i^2] = V$.  Thus the probability that the point $x$ is updated incorrectly is
\begin{align*}
	\Pr\left[\sum_{i=1}^{N-1} \Delta_i \geq 0\right]
	&= \Pr\left[\sum_{i=1}^{N-1} \Delta_i - \E\Big[\sum_{i=1}^{N-1} \Delta_i\Big] \geq (N-1) V(1-2\eta)\right]\\
	&\leq \exp\left(\frac{-(N-1)^2 V^2(1-2\eta)^2}{2 \big((N-1) V + 2(N-1) V(1-2\eta)/3\big)}\right)\\
	&\leq \exp\left(\frac{-(N-1) V (1-2\eta)^2}{2 + 4(1-2\eta)/3}\right)\\
	&\leq \exp\left(-(N-1) V(\tfrac{1}{2}-\eta)^2\right)\\
	&\leq \exp\left(-(N-1) (r/2)^d(\tfrac{1}{2}-\eta)^2\right),
\end{align*}

where in the last step we lower bound the measure $V$ of the ball around $r$ by the measure of the sphere of radius $r/2$ 
inscribed in its intersection with the unit ball.
Taking a union bound over all $N$ points, it suffices to have
$ e^{-(N-1) (r/2)^d(\tfrac{1}{2}-\eta)^2} \leq \delta/N$,
or equivalently
$$ N-1 \geq \frac{1}{(r/2)^d (\tfrac{1}{2} - \eta)^2} \left(\ln N + \ln \frac{1}{\delta}\right).$$
Using the fact that $\ln x \leq \alpha x - \ln \alpha - 1$ for all $x,\alpha > 0$ yields the claimed bound on $N$.
\end{proofof}

We can now combine this result with the efficient
agnostic active learning algorithm of \cite{ABL14}.  In particular,
applying the most recent analysis of \cite{H13b,Y13} of the algorithm
of \cite{ABL14}, we get the following bound on the number of queries
needed to efficiently learn to accuracy  $1-\epsilon$  with probability
$1-\delta$.
\begin{corollary}
There exists constant $c_1>0$ such that for
$r\leq\epsilon/(c_1\sqrt{d})$, and $N$ satisfying the bound of Theorem \ref{thm:basicupper},
if sensors are each initially
in agreement with the target linear separator independently with probability at
least $1-\eta$, then one round of best-response dynamics is
sufficient such that the agnostic active learning algorithm of
\cite{ABL14} will efficiently learn to error $\epsilon$ using only
$O(d\log 1/\epsilon)$ queries to sensors.
\end{corollary}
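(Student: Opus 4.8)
The plan is to treat the residual errors left over after one round of best-response dynamics as a small amount of adversarial label noise that is geometrically confined to a thin band around the target separator, and then feed the denoised sensor pool to the agnostic active learner of \cite{ABL14}, whose guarantee tolerates exactly this kind of noise. So the argument is essentially: (i) Theorem~\ref{thm:basicupper} says the denoising leaves errors only in the band; (ii) the band has tiny probability mass, so the target halfspace has small error on the denoised pool; (iii) \cite{ABL14} with the refined analysis of \cite{H13b,Y13} learns an $\epsilon$-accurate halfspace from $O(d\log 1/\epsilon)$ queries whenever this error is a small enough multiple of $\epsilon$.

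First I would apply Theorem~\ref{thm:basicupper} with confidence parameter $\delta/2$: since $N$ satisfies the stated bound, with probability at least $1-\delta/2$ every sensor at distance at least $r$ from the target separator carries its correct label after the synchronous update. Hence the denoised pool, viewed as labeled points with $x$ uniform on the unit ball in $\reals^d$, agrees with the target halfspace everywhere outside the band of width $2r$ about the separator, and that band has probability mass at most $c_0 r\sqrt{d}$ for an absolute constant $c_0$, as already noted after Theorem~\ref{thm:basicupper}. Since the $N$ guaranteed by Theorem~\ref{thm:basicupper} is exponentially large in $d$, a routine Chernoff/VC estimate shows the empirical fraction of the $N$ sensors falling in the band is $c_0 r\sqrt d$ up to lower-order terms, so on the denoised pool the best halfspace (namely the target) has error $\nu \le c_0 r\sqrt{d}+o(1)$. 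Choosing the constant $c_1$ appropriately, the hypothesis $r\le \epsilon/(c_1\sqrt d)$ then forces $\nu$ below any prescribed constant multiple of $\epsilon$.

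Next I would invoke the algorithm of \cite{ABL14}. The uniform distribution on the unit ball is log-concave (it is the normalized indicator of a convex body), and an isotropic affine normalization — which preserves halfspaces and does not change the number of label queries — puts it in isotropic log-concave position, so the algorithm and the refined analyses of \cite{H13b,Y13} apply. Run with target accuracy $\epsilon/2$ and confidence $\delta/2$, it returns in polynomial time a halfspace of error $O(\nu)+\epsilon/2$ using $O(d\log(1/\epsilon))$ label queries, each answered by the current denoised state of a queried sensor; because the pool of $N$ sensors is far larger than the unlabeled sample size the algorithm needs, pool-based querying behaves like sampling from the distribution, up to the same lower-order slack. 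Picking $c_1$ so that $O(\nu)=O(c_0 r\sqrt d)\le \epsilon/2$ makes the total error at most $\epsilon$, and a union bound over the (at most) two failure events yields overall success probability at least $1-\delta$.

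The main obstacle is bookkeeping rather than a single hard estimate: one must carefully compose the three sources of randomness — the initial noisy sensor states, the random placement of the sensors driving the denoising step, and the active learner's own sampling — and, most importantly, argue that errors which are \emph{confined to a band} rather than spread over the domain are a legitimate (indeed benign) instance of the adversarial-noise model underlying the \cite{ABL14} guarantee. A secondary point to state explicitly is that the $\nu$ entering that guarantee is the best-halfspace error on the finite denoised pool, not on the idealized distribution; tracking the associated uniform-convergence terms is necessary, though since $N$ is exponential in $d$ these contribute only lower-order corrections.
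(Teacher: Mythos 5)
Your proposal is correct and matches the paper's intended argument: the paper treats this corollary as an immediate combination of Theorem~\ref{thm:basicupper} (post-denoising errors confined to a band of mass $O(r\sqrt{d})$, hence $O(\epsilon)$ for $r \leq \epsilon/(c_1\sqrt{d})$) with the agnostic-noise tolerance and $O(d\log 1/\epsilon)$ query bound of \cite{ABL14} under the refined analyses of \cite{H13b,Y13}. The additional bookkeeping you flag (pool versus distribution, composing the failure probabilities) is sound but not something the paper spells out, since it states the corollary without a separate proof.
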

In Section \ref{sec:experiments} we implement this algorithm and show that experimentally
it learns a low-error decision rule even in cases where the initial value of $\eta$ is quite high.

\subsection{A negative result for arbitrary-order asynchronous dynamics}
\label{sec:lower}
We contrast the above positive result with a negative result for arbitrary-order
asynchronous moves.  In particular, we show that for any $d\geq 1$, for sufficiently
large $N$,
with high probability there exists an update order that will cause all sensors
to become negative.

\begin{theorem}\label{thm:lower}
For some absolute constant $c>0$, if $r\leq 1/2$ and sensors begin with noise rate  $\eta$, and
$$N \geq \frac{16}{(cr)^d \phi^2}\bigg(\ln\frac{8}{(cr)^d\phi^2} + \ln \frac{1}{\delta}\bigg),$$
where $\phi = \phi(\eta) = \min(\eta, \tfrac{1}{2} - \eta)$, then with probability at least $1-\delta$ there exists an ordering of the agents so that asynchronous updates in this order cause all points to have the same label.
\end{theorem}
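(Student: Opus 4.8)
The plan is to produce the bad order explicitly, as a single sweep of all $N$ agents along the separator normal, oriented toward whichever side of the separator has the larger probability mass. Write the separator as $\{x:w\cdot x=t\}$ for a unit vector $w$, let the larger side be $\{w\cdot x<t\}$ (so we aim to make every sensor \emph{negative}; the positive case is symmetric), and note that this side having mass $\ge 1/2$ forces $t\ge 0$. Order the agents $x_1,\dots,x_N$ by increasing $w\cdot x$ and update each exactly once in this order. (I take the noise to flip each sensor independently with probability exactly $\eta$ on each side; the model only demands ``$\ge 1-\eta$ correct'', but a sensor that is correct with higher probability only helps the sweep, and the regime $\phi=0$ is excluded by the hypothesis on $N$.) Correctness is an induction over the sweep: assuming $x_1,\dots,x_{i-1}$ have all become negative, the neighbors $x_j$ of $x_i$ with $w\cdot x_j\le w\cdot x_i$ --- the ``behind'' neighbors --- are negative with certainty, while the ``ahead'' neighbors still carry their original, independent, $\eta$-biased labels, and I must show $x_i$ nonetheless sees a strict negative majority.

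Let $V(x),V_-(x),V_+(x)$ be the probability masses of $x$'s $r$-ball intersected with the unit ball, of its behind part $\{w\cdot y<w\cdot x\}$, and of its ahead part, and write $V_+=V_+^{(1)}+V_+^{(2)}$ according to which side of the separator the ahead part lies on. In the configuration seen by $x_i$ the expected fraction of negative neighbors exceeds $\tfrac12$ by exactly
$$ \frac1V\Big(\tfrac12 V_-+(\tfrac12-\eta)\,(V_+^{(1)}-V_+^{(2)})\Big)\ \text{ if }x_i\text{ is negative-side}, \qquad \frac1V\Big(\tfrac12(V_--V_+)+\eta V_+\Big)\ \text{ if positive-side}. $$
The technical core of the proof is the geometric claim that each of these is at least $c_0\,\phi$ for an absolute constant $c_0>0$, uniformly over $x$ in the unit ball. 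The estimate is immediate for $x$ far from the boundary of the unit ball, where the $r$-ball is symmetric about $x$ so $V_-=V_+$: on the positive side this leaves $\eta V_+/V$, and on the negative side, using $V_+^{(2)}\le V_+$ and $\tfrac12-\eta\le\tfrac12$, it leaves at least $\eta V_-/V$. For $x$ near the boundary of the unit ball one exploits that the spherical cap the unit ball removes from $x$'s $r$-ball is biased \emph{toward smaller} $w\cdot y$ in precisely the cases that matter: because the sweep runs toward the larger side, every positive-side $x$ has $w\cdot x>t\ge0$, and every negative-side $x$ within $r$ of the separator has $w\cdot x\approx t\ge0$, so in both regimes $V_-\ge V_+$ and the bounds above survive (the term $\tfrac12(V_--V_+)$ carrying the argument when $V_+$ is tiny, e.g.\ near the far pole). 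Assembling this lower bound over all locations --- deep versus near the separator, interior versus near the unit-ball boundary, and each side --- while keeping the constant clean, is the main obstacle; none of it is deep, but it is where the care goes.

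Granting the lemma, the remainder mirrors the proof of Theorem~\ref{thm:basicupper}. Using $V(x)\ge(r/2)^d$, a Chernoff bound over the random placement and a union bound over the $N$ agents show that, except on a failure event of probability $\le\delta/2$, every sensor has degree $\Theta\!\big(N(r/2)^d\big)$ and every relevant behind/ahead/side count is within $\tfrac{c_0}{4}\phi\, N$ of its expectation; condition on the complement. For each $i$, the number of negative ahead-neighbors of $x_i$ is then a sum of independent indicators whose mean exceeds the threshold needed for a strict negative majority by at least $\tfrac{c_0}{2}\phi\,V(x_i)\,N\ge\tfrac{c_0}{2}\phi\,(r/2)^d N$, so by Bernstein (or Hoeffding) $x_i$ fails to see a strict negative majority with probability at most $\exp\!\big(-\Omega(\phi^2 N(r/2)^d)\big)$. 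A union bound over the $N$ agents makes every update of the sweep output a negative label, so all $N$ agents end negative, with total failure probability $\le\delta$, provided $\Omega(\phi^2 N(r/2)^d)\ge\ln(N/\delta)$; the inequality $\ln x\le\alpha x-\ln\alpha-1$ used in Theorem~\ref{thm:basicupper} converts this into the stated bound on $N$, the absolute constant $c$ absorbing the factor $1/2$ and the constants from the concentration step.
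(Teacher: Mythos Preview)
Your proposal is correct and follows essentially the same route as the paper: order the sensors by $w\cdot x$, argue inductively that each update comes out negative because the already-updated ``behind'' half is all negative while the ``ahead'' half carries only an $\eta$-bias, bound $\E[\Delta_i]$ by $-\Theta((cr)^d\phi)$, and finish with Bernstein plus a union bound. The one step you explicitly leave as a sketch---the geometric estimate when $x$'s $r$-ball meets both the separator and the boundary of the unit ball---is precisely where the paper does its real work: it projects onto the $2$-plane spanned by $x$ and $w$, compares the angle $\alpha$ subtended by the ahead positive-biased region to the angle $\beta$ subtended by the behind region, and shows $\beta\ge\alpha$ whenever $r\le 1/2$ by an elementary triangle argument (this is where the hypothesis $r\le 1/2$ enters). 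Your heuristic that the cap removed by the unit sphere lies on the ahead side when $w\cdot x>0$ is the right intuition for this lemma, but the paper's angle comparison is what actually pins down the constant and handles the curvature of the unit sphere cleanly; your generalization to off-center separators via sweeping toward the larger side is a nice touch but not needed in the paper's setting, where the separator passes through the origin.
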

\begin{sketch}
Consider the case $d=1$ and a target function $x>0$.  Each subinterval of $[-1,1]$ of width $r$ has probability mass $r/2$, and let 
$m= r N/2$ be the expected number of points within such an interval.  The given value of $N$ is 
sufficiently large that with high probability, all such intervals in the initial state have both a positive count and
a negative count that are within $\pm \frac{\phi}{4}m$ of their expectations.  
This implies that if sensors update left-to-right, initially all sensors will (correctly) flip to negative, 
because their neighborhoods have more negative points than positive points.  But
then when the ``wave'' of sensors reaches the positive region, they will continue (incorrectly)
flipping to negative because the at least $m(1-\frac{\phi}{2})$ negative points in the left-half of their neighborhood will outweigh the at most $(1-\eta+\frac{\phi}{4})m$ positive points in the right-half of their neighborhood. 
For a detailed proof and the case of general $d>1$, see Appendix \ref{sec:proofs}.
\end{sketch}

\subsection{Random order dynamics}
While Theorem \ref{thm:lower} shows that there {\em exist} bad orderings for asynchronous
dynamics, we now show that we can get positive
theoretical guarantees for {\em random order}
best-response dynamics.

The high level idea of the analysis is to partition the sensors into three sets: those
that are within distance $r$ of the target separator, those at distance between
$r$ and $2r$ from the target separator, and then all the rest.  For those at distance
$<r$ from the separator we will make no guarantees: they might update incorrectly
when it is their turn to move due to their neighbors on the other side of the target.
Those at distance between $r$ and $2r$ from the separator
might also update incorrectly (due to ``corruption'' from neighbors at distance
$<r$ from the separator that had earlier updated incorrectly) but we will show that
with high probability this only happens in the last $1/4$ of the ordering.  I.e.,
within the first $3N/4$ updates, with high probability there are
no incorrect updates by sensors at distance between $r$ and $2r$ from the
target.  Finally, we show that with high probability, those at distance greater
than $2r$ {\em never} update incorrectly. This last part of the argument
follows from two facts: (1) with high probability all such points begin with more
correctly-labeled neighbors than incorrectly-labeled neighbors (so they will
update correctly so long as no neighbors have previously updated incorrectly),
and (2) after $3N/4$ total updates have been made, with high probability more than
half of the neighbors of each such point have already (correctly) updated, and
so those points will now update correctly no matter what their remaining neighbors do.
Our argument for the sensors
at distance in $[r,2r]$ requires $r$ to be small compared to
$(\frac{1}{2} - \eta)/\sqrt{d}$, and the final error is
$O(r\sqrt{d})$, so the conclusion is we have a total error less than $\epsilon$
for $r < c\min[\frac{1}{2}-\eta,\epsilon]/\sqrt{d}$ for some absolute constant $c$.

We begin with a key lemma.  For any given sensor, define its inside-neighbors to be its
neighbors in the direction of the target separator and its
outside-neighbors to be its neighbors away from the target
separator.  Also, let $\gamma = 1/2 - \eta$.

\begin{lemma}\label{lem:buffernodes}
For any $c_1, c_2>0$ there exist $c_3,c_4>0$ such that for $r \leq \frac{\gamma}{c_3\sqrt{d}}$
and $N \geq \frac{c_4}{(r/2)^d\gamma^2}\ln(\frac{1}{r^d\gamma\delta})$,
with probability $1-\delta$, each sensor $x$ at distance between $r$ and $2r$
from the target separator has $m_x \geq \frac{c_1}{\gamma^2}\ln(4N/\delta)$
neighbors, and furthermore the number of
inside-neighbors of $x$ that move before $x$ is within
$\pm\frac{\gamma}{c_2}m_x$ of the number of outside
neighbors of $x$ that move before $x$.
\end{lemma}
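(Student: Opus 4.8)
The plan is to handle the two claims separately, starting with the easy one. First I would show that with high probability every sensor $x$ at distance between $r$ and $2r$ from the target has at least $m_x \geq \frac{c_1}{\gamma^2}\ln(4N/\delta)$ neighbors. The ball of radius $r$ around such an $x$ has probability mass at least $(r/2)^d$ (using the same inscribed-sphere argument as in the proof of Theorem~\ref{thm:basicupper}), so the expected number of neighbors is at least $(N-1)(r/2)^d$; the hypothesis on $N$ makes this comfortably larger than, say, $\frac{2c_1}{\gamma^2}\ln(4N/\delta)$, and a Chernoff lower-tail bound gives the claim for a fixed $x$ with failure probability at most $\frac{\delta}{2N}$, then union bound over all $N$ sensors. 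This determines $c_4$ in terms of $c_1$ (and requires $c_4$ large enough absolutely).

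The main work is the second claim: conditioned on the neighbor set, the number $I_x$ of inside-neighbors of $x$ that move before $x$ is within $\pm\frac{\gamma}{c_2}m_x$ of the number $O_x$ of outside-neighbors that move before $x$. Here I would condition on the positions of all sensors (so $m_x$, and the split of $x$'s neighbors into inside- and outside-neighbors, are fixed), and use the randomness of the update order. The key geometric observation is that because $x$ is at distance $\geq r$ from the separator, its $r$-ball lies entirely on one side, so \emph{every} neighbor of $x$ is on the same side of the separator as $x$; ``inside'' vs.\ ``outside'' refers to being closer to or farther from the separator than $x$ along the normal direction. By the near-symmetry of the uniform distribution restricted to this ball (it is symmetric under reflection through the hyperplane through $x$ parallel to the separator, up to the truncation by the unit ball), the number of inside-neighbors $k_{\mathrm{in}}$ and outside-neighbors $k_{\mathrm{out}}$ satisfy $|k_{\mathrm{in}} - k_{\mathrm{out}}| = O(r\sqrt d \cdot m_x)$ with high probability — this is where $r \leq \gamma/(c_3\sqrt d)$ enters, forcing this imbalance below $\frac{\gamma}{2c_2}m_x$. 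Then, for the update order: a uniformly random permutation assigns to each of $x$'s $m_x$ neighbors an independent-enough ``moves before $x$'' indicator with probability roughly $1/2$; more precisely, if $p$ is the (random) position of $x$ in the order, then conditioned on $p$, each neighbor moves before $x$ independently with the same probability, so $I_x$ and $O_x$ are sums of i.i.d.\ Bernoullis and $\E[I_x - O_x \mid p] = (k_{\mathrm{in}} - k_{\mathrm{out}})\cdot\frac{p-1}{N-1}$, which is itself $O(r\sqrt d\, m_x)$. A Hoeffding/Bernstein bound on $I_x - O_x$ around its conditional mean, using that it is a difference of $m_x$ bounded independent terms and $m_x \geq \frac{c_1}{\gamma^2}\ln(4N/\delta)$, gives deviation at most $\frac{\gamma}{2c_2}m_x$ with failure probability at most $\frac{\delta}{2N}$, provided $c_1$ is chosen large enough relative to $c_2$. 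Combining the geometric imbalance bound and the order-randomness bound via the triangle inequality yields $|I_x - O_x| \leq \frac{\gamma}{c_2}m_x$, and a final union bound over all $N$ sensors completes the proof.

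I would organize the constant-chasing as follows: $c_2$ and $c_1$ are given; choose $c_3$ large enough that $r\sqrt d \leq \gamma/c_3$ forces the geometric imbalance below $\frac{\gamma}{2c_2}m_x$; then choose $c_1$ (if not already large enough) and correspondingly inflate the requirement, but since $c_1$ is given as input the cleaner route is to note that the concentration error scales as $m_x^{-1/2}$ and $m_x \gtrsim c_1\gamma^{-2}\ln(4N/\delta)$, so the deviation is $\lesssim \gamma/\sqrt{c_1}$ times $m_x$ — wait, this must be re-examined: we need the \emph{additive} deviation $\leq \frac{\gamma}{2c_2}m_x$, and Hoeffding gives additive deviation $O(\sqrt{m_x \ln(N/\delta)}) = O(\sqrt{m_x}\cdot \sqrt{\gamma^2 m_x/c_1}) = O(\gamma m_x/\sqrt{c_1})$, which is $\leq \frac{\gamma}{2c_2}m_x$ once $c_1 \geq 4c_2^2$ times an absolute constant. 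So in fact $c_3$ depends on $c_2$ only, and the lemma's hypothesis implicitly needs $c_1$ not too small; since the statement quantifies ``for any $c_1,c_2>0$ there exist $c_3,c_4$,'' I would simply replace $c_1$ by $\max(c_1, 4c_2^2 \cdot \text{const})$ inside the argument (only increasing $m_x$, which is harmless) and absorb the rest into $c_4$.

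The step I expect to be the main obstacle is the geometric imbalance bound — quantifying how close to symmetric the uniform measure on the $r$-ball around $x$ is under reflection through the hyperplane through $x$ parallel to the separator, including the correction from truncation by the unit ball, and showing this translates (after sampling $m_x$ points) into $|k_{\mathrm{in}} - k_{\mathrm{out}}| = O(r\sqrt d\, m_x)$ with high probability. The order-randomness concentration is routine once the neighbor positions are fixed; it is this interaction between the continuous geometry (dimension-dependent skew of order $r\sqrt d$) and the need to beat the target tolerance $\gamma/c_2$ that forces the scaling $r \leq \gamma/(c_3\sqrt d)$ and is the technical heart of the lemma.
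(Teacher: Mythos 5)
Your proposal is correct and takes essentially the same route as the paper: a Chernoff/Hoeffding-plus-union-bound argument for the $m_x$ guarantee, Hoeffding concentration for the inside/outside balance among neighbors that precede $x$, and the key $O(r\sqrt{d})$ skew caused by truncation at the unit sphere, absorbed by choosing $c_3$ (and $c_4$) appropriately. The only difference is bookkeeping: the paper conditions on how many neighbors move before $x$ and observes that those earlier movers are i.i.d.\ uniform in the $r$-ball, so their inside/outside indicators are exactly fair coins in the interior case, whereas you condition on positions first and then invoke permutation randomness, which costs you the extra step of bounding $k_{\mathrm{in}}-k_{\mathrm{out}}$ (where the deviation term $O(\sqrt{m_x\ln(N/\delta)})$ should be stated explicitly) and a sampling-without-replacement caveat, both easily handled as you indicate.
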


\begin{proof}
First, the guarantee on $m_x$ follows immediately from the fact that the probability
mass of the ball around each sensor $x$ is at least $(r/2)^d$, so for appropriate
$c_4$ the expected
value of $m_x$  is at least $\max[8,\frac{2c_1}{\gamma^2}]\ln(4N/\delta)$,
and then applying Hoeffding bounds \cite{Hoeffding63,BLM13} and the union bound.
Now, fix some sensor $x$ and let us first assume the
ball of radius $r$ about $x$ does not cross the unit sphere.
Because this is random-order dynamics, if $x$ is the $k$th sensor to move within
its neighborhood, the $k-1$ sensors that move earlier are each equally likely to be
an inside-neighbor or an outside-neighbor.  So the
question reduces to: if we flip $k-1 \leq m_x$ fair coins, what is the
probability that the number of heads differs from the number of tails
by more than $\frac{\gamma}{c_2} m_x$.  For $m_x \geq
2(\frac{c_2}{\gamma})^2\ln(4N/\delta)$, this is at most $\delta/(2N)$ by
Hoeffding bounds.   Now, if the ball of radius $r$ about $x$ does cross the unit sphere,
then a random neighbor is slightly more likely to be an inside-neighbor than an
outside-neighbor. However, because $x$ has distance at most $2r$ from the target
separator, this difference in probabilities is only $O(r\sqrt{d})$, which is at most
$\frac{\gamma}{2c_2}$ for appropriate choice of constant $c_3$.\footnote{We can
analyze the difference in probabilities as follows. First, in the worst case, $x$ is at distance exactly $2r$
from the separator, and is right on the edge of the unit ball.  So we can define our coordinate system to
view $x$ as being at location $(2r, \sqrt{1-4r^2},0, \ldots, 0)$.  Now, consider
adding to $x$ a random offset $y$ in the $r$-ball.  We want to look at the probability
that $x+y$ has Euclidean length less than 1 conditioned on the first coordinate of $y$
 being negative compared to this probability conditioned on the first coordinate of $y$ being
 positive.  Notice that because the second coordinate of $x$ is nearly 1,
 if $y_2 \leq -cr^2$ for appropriate $c$ then $x+y$ has length less than 1 no matter what
the other coordinates of $y$ are
(worst-case is if $y_1 = r$ but even that adds at most $O(r^2)$ to the squared-length).
On the other hand, if $y_2 \geq cr^2$ then $x+y$ has length greater than 1 also no matter
what the other coordinates of $y$ are.
So, it is only in between that the value of $y_1$ matters.  But notice that the
distribution over $y_2$ has maximum density $O(\sqrt{d}/r)$.  So, with probability nearly
$1/2$, the point is inside the unit ball for sure, with probability nearly $1/2$ the point is
outside the unit ball for sure, and only with probability $O(r^2\sqrt{d}/r) = O(r\sqrt{d})$
does the $y_1$ coordinate make any difference at all.}
So, the result follows by applying Hoeffding bounds to the $\frac{\gamma}{2c_2}$
gap that remains.
\end{proof}
\begin{theorem}
For some absolute constants $c_3,c_4$, for $r \leq \frac{\gamma}{c_3\sqrt{d}}$
and $N \geq \frac{c_4}{(r/2)^d\gamma^2}\ln(\frac{1}{r^d\gamma\delta})$,
in random order dynamics,
with probability $1-\delta$ all sensors at distance greater than $2r$ from
the target separator update correctly.
\end{theorem}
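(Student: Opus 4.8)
The plan is to establish, by a single induction over the update order, the stronger statement that if any sensor at distance $>r$ from the separator ever updates incorrectly then the \emph{earliest} such update is made by a sensor in the band $[r,2r]$ and happens only after the first $3N/4$ updates; the theorem is the special case that no distance-$>2r$ sensor ever updates incorrectly. I would run this induction on the intersection of a few events: (A) every distance-$>2r$ sensor $x$ has strictly more correctly- than incorrectly-labeled neighbors in the initial state; (B) every distance-$>2r$ sensor $x$ has $m_x\ge \frac{c_1}{\gamma^2}\ln(4N/\delta)$ neighbors; (C) for every distance-$>2r$ sensor $x$, more than $m_x/2$ of its neighbors update within the first $3N/4$ updates; and (D) the full conclusion of Lemma~\ref{lem:buffernodes} (degree bound and inside/outside move-count balance for every band sensor), together with the analogous initial-label concentration for band sensors. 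The claim is that $(A)\cap(B)\cap(C)\cap(D)$ forces the stronger statement and that its complement has probability at most $\delta$.

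I would bound the probabilities as follows. Event (A) is exactly the Bernstein computation in the proof of Theorem~\ref{thm:basicupper}: the $r$-ball around a distance-$>2r$ sensor lies entirely on one side of the separator and has mass $V\ge (r/2)^d$, so the probability that its wrong-label count reaches its correct-label count is at most $\exp\!\big(-(N-1)(r/2)^d\gamma^2\big)\le \delta/(8N)$ for $c_4$ large. Event (B) for distance-$>2r$ sensors follows by the same Hoeffding-plus-union-bound argument used inside Lemma~\ref{lem:buffernodes}. For (C), fix $x$ and condition on the placement; over the random order the number of $x$'s $m_x$ neighbors among the first $3N/4$ to move is hypergeometric with mean $\tfrac34 m_x$, so standard concentration makes it exceed $m_x/2$ except with probability $e^{-\Omega(m_x)}\le\delta/(8N)$ once $c_1$ (hence $c_4$) is large. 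A union bound over the $\le N$ distance-$>2r$ sensors disposes of (A),(B),(C), and (D) costs a further $\delta/2$ (Lemma~\ref{lem:buffernodes} plus one more Bernstein bound); adjusting constants gives total failure $\le\delta$.

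For the induction step, suppose the stronger statement fails and let $x$ be the distance-$>r$ sensor whose incorrect update is earliest. If $x$ is at distance $>2r$: every neighbor of $x$ then lies at distance $>r$, so every neighbor that updated before $x$ did so correctly (by minimality of $x$); hence at the moment $x$ moves, its count of correctly-labeled neighbors is at least its initial value and its count of incorrectly-labeled neighbors is at most its initial value, so by (A) a strict majority of $x$'s neighbors is correctly labeled and $x$ updates correctly --- a contradiction, no matter when $x$ moves. (If $x$ moves after the first $3N/4$ updates one does not even need (A): by (C) more than $m_x/2$ of $x$'s neighbors have already updated, hence correctly, so the majority is already correct.) If $x$ is at distance $\in[r,2r]$: by minimality $x$ moves within the first $3N/4$ updates; its already-moved outside-neighbors are at distance $>r$ and so moved correctly, while its already-moved inside-neighbors (which may include uncontrolled distance-$<r$ sensors) number at most the already-moved outside-neighbors plus $\tfrac{\gamma}{c_2}m_x$ by (D)/Lemma~\ref{lem:buffernodes}; combining this with the $\Omega(\gamma m_x)$ initial correct margin of $x$ (Bernstein, the $r$-ball around $x$ lying on one side of the separator) and taking $c_2$ (hence $c_3$) large makes the correct label retain a strict majority even if every already-moved inside-neighbor is wrong, so $x$ updates correctly --- contradiction. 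This closes the induction and yields the theorem.

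The step I expect to be the main obstacle is the $[r,2r]$ case just sketched: it is where the hypothesis $r\le \gamma/(c_3\sqrt{d})$ is actually spent, and it requires (i) absorbing the near-unit-sphere asymmetry between inside- and outside-neighbors quantified in the footnote of Lemma~\ref{lem:buffernodes}, and (ii) surviving the fact that $x$'s inside-neighbors can include distance-$<r$ sensors whose labels are entirely uncontrolled --- which is precisely why the guarantee for band sensors only covers the first $3N/4$ updates. The only other point needing care is the routine bookkeeping that ``the first $3N/4$ updates'' agrees with the natural threshold event up to $O(\sqrt{N\log(1/\delta)})$ sensors, an error swallowed by the constant-fraction gaps (``$3/4$ versus $1/2$'' in (C), and the analogous slack in the band argument).
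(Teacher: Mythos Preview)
Your plan matches the paper's sketch closely: partition into distance $>2r$, band $[r,2r]$, and distance $<r$; argue inductively that no distance-$>r$ sensor updates incorrectly during the first $3N/4$ moves; then finish off the distance-$>2r$ sensors after time $3N/4$ using the event (C). The events (A)--(D) you isolate are exactly the ones the paper uses implicitly.

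There is, however, a genuine gap in your band-case accounting. You write that the already-moved inside-neighbors number at most the already-moved outside-neighbors plus $\tfrac{\gamma}{c_2}m_x$, and then say that ``combining this with the $\Omega(\gamma m_x)$ initial correct margin of $x$'' gives a strict majority. But the \emph{total} initial margin of $x$ is not the quantity that survives after updates: once you allow every already-moved inside-neighbor to be wrong and every already-moved outside-neighbor to be correct, the net contribution of the updated neighbors is at best $-\tfrac{\gamma}{c_2}m_x$, and what remains is the initial-label margin \emph{restricted to the un-updated neighbors}. That restricted margin is $\Theta(\gamma\cdot\#\text{un-updated})$, not $\Theta(\gamma m_x)$, and it is only large enough to beat $\tfrac{\gamma}{c_2}m_x$ if you also know that a constant fraction of $x$'s neighbors have \emph{not} yet moved. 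This is exactly what the paper invokes: since $x$ moves within the first $3N/4$ updates, with high probability at least $\tfrac18 m_x$ of its neighbors are still un-updated, and concentration on their initial labels gives a bias exceeding $\tfrac{\gamma}{8}m_x$, which dominates the $\le\tfrac{\gamma}{8}m_x$ deficit from the updated neighbors. Your final paragraph hints at ``analogous slack in the band argument,'' but this lower bound on the number of un-updated neighbors is not bookkeeping---it is the mechanism by which the correct bias is supplied, and your write-up does not invoke it.

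Two smaller points. First, the phrase ``by minimality $x$ moves within the first $3N/4$ updates'' in the band case is a slip: this comes from the case split (the stronger statement fails, and the distance-$>2r$ case has already been ruled out), not from minimality. Second, the theorem is not literally a ``special case'' of your stronger statement: the stronger statement only controls the \emph{first} incorrect distance-$>r$ update, so to conclude that \emph{no} distance-$>2r$ sensor ever errs you still need the separate (C)-based argument for times beyond $3N/4$ (which you have, but you place it inside the contradiction for the earliest bad update rather than as the final step that actually yields the theorem).
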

\begin{sketch}
We begin by using Lemma \ref{lem:buffernodes} to argue that with high probability, no points
at distance between $r$ and $2r$ from the separator update incorrectly within the first
$3N/4$ updates (which immediately implies that all points at distance greater than $2r$
update correctly as well, since by Theorem \ref{thm:basicupper}, with high probability they begin with more
correctly-labeled neighbors than incorrectly-labeled neighbors and their neighborhood only becomes more favorable).
In particular, for any given such point, the concern is that some of its inside-neighbors may have
previously updated incorrectly. However, we use two facts: (1) by Lemma \ref{lem:buffernodes}, we can set
$c_4$ so that
with high probability the total contribution of neighbors that have already updated is at most $\frac{\gamma}{8}m_x$
in the incorrect direction (since the outside-neighbors will have updated correctly, by induction),
and (2) by standard concentration inequalities \cite{Hoeffding63,BLM13},
with high probability at least $\frac{1}{8}m_x$ neighbors of $x$ have {\em not} yet updated.
These $\frac{1}{8}m_x$ un-updated neighbors together have in expectation a $\frac{\gamma}{4} m_x$ bias in the correct
direction, and so with high probability have greater than a $\frac{\gamma}{8}m_x$ correct bias for sufficiently large $m_x$
(sufficiently large $c_1$ in Lemma \ref{lem:buffernodes}).  So, with high probability this overcomes
the at most $\frac{\gamma}{8}m_x$ incorrect bias of neighbors that
have already updated, and so the points will indeed update correctly as desired.
Finally, we consider the points of distance $\geq 2r$.  Within
the first $\frac{3}{4}N$ updates, with high probability they will all update correctly as argued above.  Now consider time 
$\frac{3}{4}N$.
For each such point, in expectation $\frac{3}{4}$ of its neighbors have already updated, and with high probability,
for all such points the fraction of neighbors that have updated is more than half.  Since all neighbors
have updated correctly so far, this means these points will have more correct neighbors than incorrect neighbors no matter what the
remaining neighbors do, and so they will update correctly themselves.
\end{sketch}

\section{Query efficient polynomial time active learning algorithm}
\label{sec:active}

Recently, Awasthi et al.~\cite{ABL14} gave the first polynomial-time active learning algorithm
able to learn linear separators to error $\epsilon$ over the uniform
distribution in the presence of agnostic noise of rate $O(\epsilon)$.
Moreover, the algorithm does so with optimal query complexity of $O(d \log 1/\epsilon)$.
This algorithm is ideally suited to our setting because (a) the sensors are
uniformly distributed, and (b) the result of best response dynamics is 
noise that is low but potentially highly coupled (hence, fitting the low-noise
agnostic model).  In our experiments (Section \ref{sec:experiments})
we show that indeed this algorithm
when combined with best-response dynamics  achieves
low error from a small number of queries, outperforming
active and passive learning algorithms without the best-response
denoising step, as well as outperforming passive learning 
algorithms with denoising.

Here, we briefly describe the algorithm of \cite{ABL14} and the intuition
behind it.  At high level, the algorithm proceeds through several rounds, in each performing the following operations
(see also Figure \ref{fig:localization}):

\begin{description}
\item[Instance space localization:] 
Request labels for a random sample of points within a band of width $b_k = O(2^{-k})$ around the boundary of the previous
hypothesis $w_k$.
\end{description}

\begin{description}
\item[Concept space localization:] Solve for hypothesis vector $w_{k+1}$ by minimizing hinge loss subject to the 
constraint that $w_{k+1}$  lie within a radius $r_k$ from $w_k$; that is, $||w_{k+1} - w_k|| \leq r_k$.
\end{description}

\cite{ABL14,H13b,Y13} show that by setting the parameters appropriately (in particular, $b_k = \Theta(1/2^k)$ and $r_k = \Theta(1/2^k)$), the 
algorithm will achieve error $\epsilon$ using only $k=O(\log 1/\epsilon)$
rounds, with $O(d)$ label requests per round.
In particular, a key idea of their analysis is to decompose, in round $k$, the error of a candidate classifier $w$ as its error outside margin $b_{k}$ of the current separator plus its error inside margin $b_{k}$, and to prove that for these parameters, a small constant error inside the margin suffices to reduce overall error by a constant factor. A second key part is that by constraining the search for $w_{k+1}$ to vectors within a ball of radius $r_k$ about $w_{k}$, they show that hinge-loss acts as a sufficiently faithful proxy for 0-1 loss. 

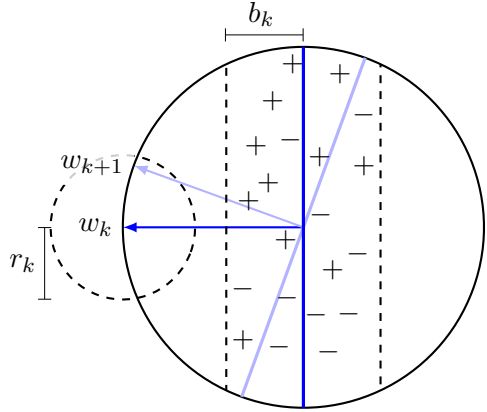
\begin{figure}
\begin{center}
\begin{tikzpicture}[thick, scale=.8, label distance=-.15cm, rotate=-20]
\usetikzlibrary{patterns}
\usetikzlibrary{shapes.geometric}

\draw[-latex, blue] (0,0) -- +(200:3cm) node[label={[black]180:{$w_{k}$}}] (w) {};
\draw[|-|, very thin] (110:3.2cm) -- ++(20:-.7cm) node[label=above:{$b_k$}] {} -- +(20:-.6cm);
\draw[dashed] (w) circle (1.2cm);
\draw[|-|, very thin] (200:4.3) -- ++(-70:.6) node[label=left:{$r_k$}] {} -- +(-70:.6);
\draw[-latex, blue!30] (0,0) -- (-3,0) node[label={[black,fill=white, opacity=.8, text opacity=1]180:{$w_{k+1}$}}] {};

\begin{scope}
	\clip[draw] (0,0) circle (3cm);
	\draw[very thick, blue!30] (0,-4) -- (0,4);
	\draw[very thick, blue] (-70:4cm) -- (110:4cm);
	\node (a) at (-70:.2cm) {};
	\draw[dashed] (1,1) +(-70:4cm) -- +(110:4cm);
	\draw[dashed] (-1,-1) +(-70:4cm) -- +(110:4cm);	

\end{scope}

\path
	(-1.2,1.8) node {$+$} to
	(-1.1,2.5) node {$+$} to
	(-.3,-2.1) node {$+$} to
	(-.6,-1.3) node {$-$} to
	(-.15,1.2) node {$+$} to
	(-.3,2.6) node {$+$} to
	(-1,.1) node {$+$} to
	(-1.2,1) node {$+$} to
	(-.8,.5) node {$+$} to
	(-.7,1.3) node {$-$} to
	(-.2, -.3) node {$+$};

\path
	(1.1,-1.8) node {$-$} to
	(.3,2.1) node {$-$} to
	(.6,1.3) node {$+$} to
	(.15,-1.2) node {$-$} to
	(.3,-2) node {$-$} to
	(1,-.1) node {$-$} to
	(1.2,-1.1) node {$-$} to
	(.7,-.5) node {$+$} to
	(.7,-1.3) node {$-$} to
	(.2, .3) node {$-$};
\end{tikzpicture}
\caption{The margin-based active learning algorithm after iteration $k$. The algorithm samples points within margin $b_k$ of the current weight vector $w_k$ and then minimizes the hinge loss over this sample subject to the constraint that the new weight vector $w_{k+1}$ is within distance $r_k$ from $w_k$.\label{fig:localization}}
\end{center}
\end{figure}

\section{Experiments}\label{sec:experiments}

In our experiments we seek to determine whether our overall algorithm of  best-response dynamics combined with active learning is effective at denoising the sensors and learning the target boundary. The experiments were run on synthetic data, and compared active and passive learning (with Support Vector Machines) both pre- and post-denoising.

\paragraph{Synthetic data.}
The $N$ sensor locations were generated from a uniform distribution over the unit ball in $\reals^2$, and the target boundary was fixed as a randomly chosen linear separator through the origin.
To simulate noisy scenarios, we corrupted the true sensor labels using two different methods: 1) flipping the sensor labels with probability $\eta$ and 2) flipping randomly chosen sensor labels and all their neighbors, to create pockets of noise, with $\eta$ fraction of total sensors corrupted.

\paragraph{Denoising via best-response dynamics.} 
In the denoising phase of the experiments, the sensors applied the basic majority consensus dynamic. That is, each sensor was made to update its label to the majority label of its neighbors within distance $r$ from its location\footnote{We also tested distance-weighted majority and randomized majority dynamics and experimentally observed similar results to those of the basic majority dynamic.}. We used radius values $r \in \{0.025, 0.05, 0.1, 0.2\}$.
Updates of sensor labels were carried out both through simultaneous updates to all the sensors in each iteration (synchronous updates) and updating one randomly chosen sensor in each iteration (asynchronous updates).

\paragraph{Learning the target boundary.}
After denoising the dataset, we employ the agnostic active learning algorithm of Awasthi et al.~\cite{ABL14} described in Section~\ref{sec:active} to decide which sensors to query and obtain a linear separator. We also extend the algorithm to the case of non-linear boundaries by implementing a kernelized version (see Appendix \ref{sec:moreexperiments} for more details). Here we compare the resulting error (as measured against the ``true'' labels given by the target separator) against that obtained by training a SVM on a randomly selected labeled sample of the sensors of the same size as the number of queries used by the active algorithm. We also compare these post-denoising errors with those of the active algorithm and SVM trained on the sensors before denoising. For the active algorithm, we used parameters asymptotically matching those given in Awasthi et al \cite{ABL14} for a uniform distribution. For SVM, we chose for each experiment the regularization parameter that resulted in the best performance.

\subsection{Results}

Here we report the results for $N = 10000$ and $r = 0.1$. Results for experiments with other values of the parameters are included in Appendix \ref{sec:moreexperiments}. Every value reported is an average over 50 independent trials.

\paragraph{Denoising effectiveness.}
Figure~\ref{fig:denoising} (left side) shows, for various initial noise rates, the fraction of sensors with incorrect labels after applying 100 rounds of synchronous denoising updates. In the random noise case, the final noise rate remains very small even for relatively high levels of initial noise. Pockets of noise appear to be more difficult to denoise. In this case, the final noise rate increases with initial noise rate, but is still nearly always smaller than the initial level of noise.

\begin{figure}[t]
\begin{center}
\includegraphics[width = 0.48\linewidth]{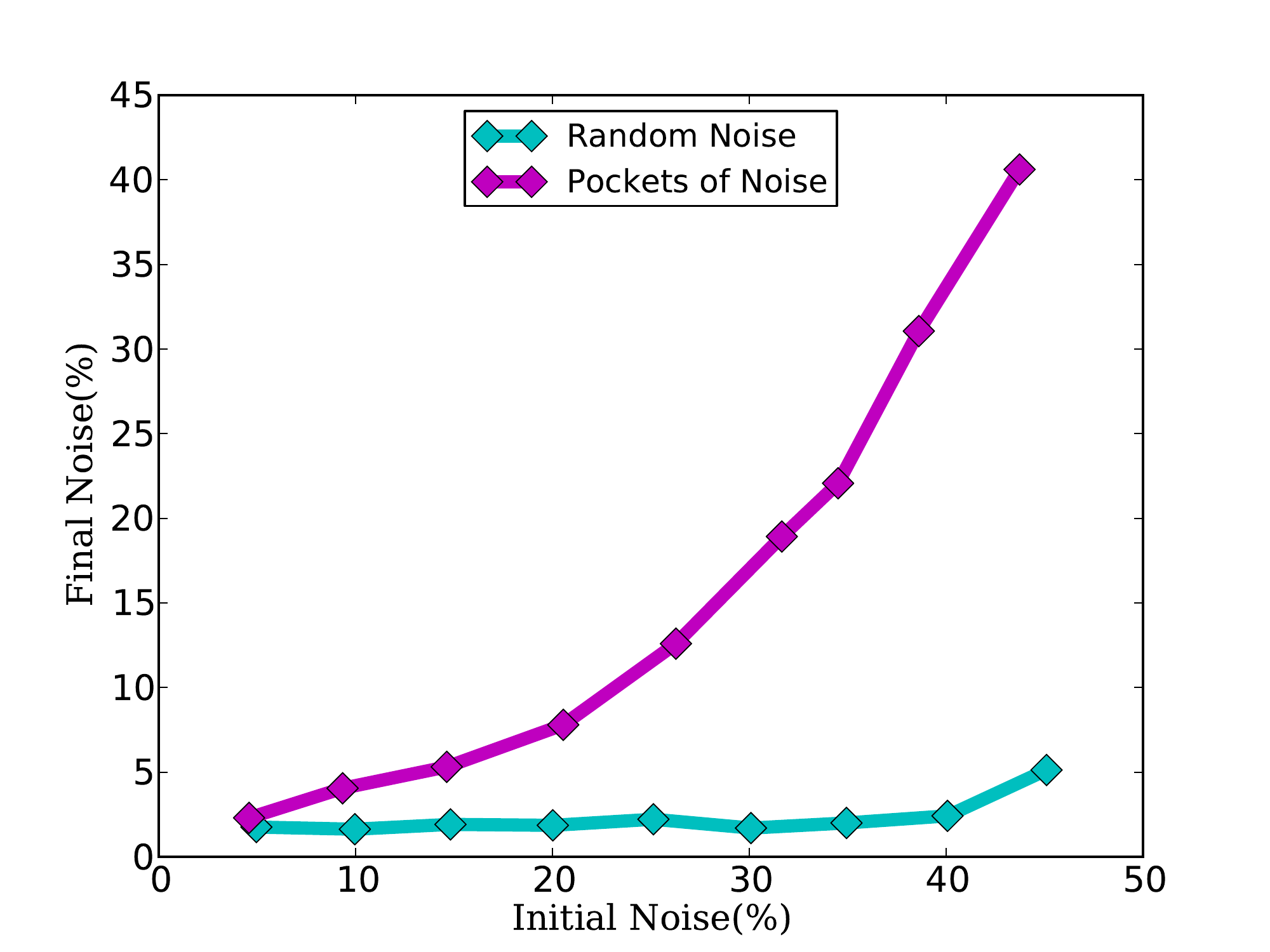}
\includegraphics[width = 0.48\linewidth]{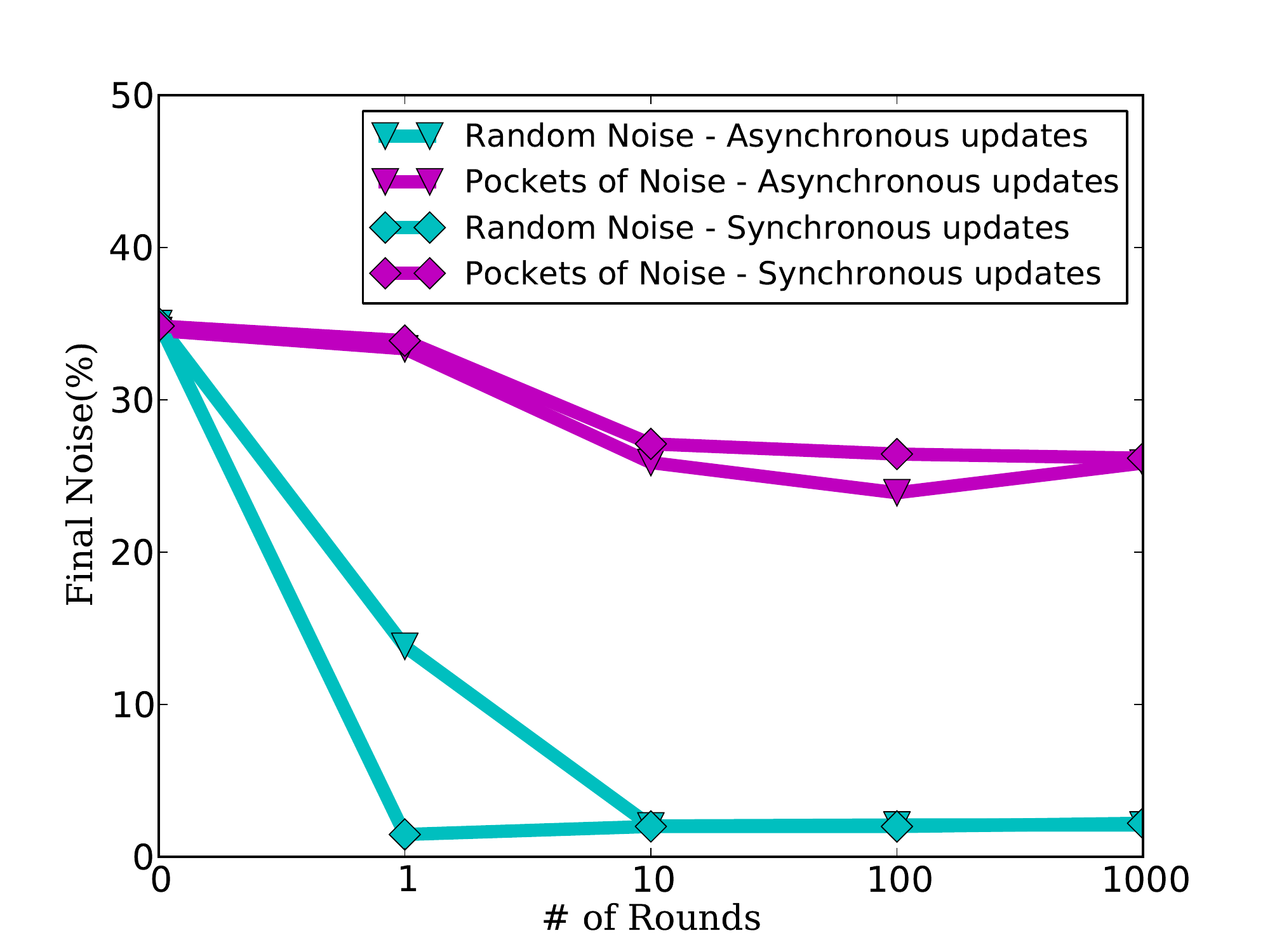}
\caption{Initial vs. final noise rates for synchronous updates (left) and comparison of synchronous and asynchronous dynamics (right). A synchronous round updates every sensor once simultaneously, while one asynchronous round consists of $N$ random updates.}
\label{fig:denoising}
\end{center}
\end{figure}

\paragraph{Synchronous vs.\ asynchronous updates.}

To compare synchronous and asynchronous updates we plot the noise rate as a function of the number of rounds of updates in Figure~\ref{fig:denoising} (right side). As our theory suggests, both simultaneous updates and asynchronous updates can quickly converge to a low level of noise in the random noise setting. Neither update strategy achieves the same level of performance in the case of pockets of noise.

\paragraph{Generalization error: pre- vs.~post-denoising and active vs.\ passive.}
We trained both active and passive learning algorithms on both pre- and post-denoised sensors at various label budgets,
and measured the resulting generalization error (determined by the angle between the target and the learned separator). The results of these experiments are shown in Figure~\ref{fig:error}. Notice that, as expected, denoising
helps significantly and on the denoised dataset the active algorithm achieves better generalization error than support vector machines at low label budgets.
For example, at a label budget of 30, active learning achieves generalization error approximately 33\% lower than the generalization error of SVMs.
Similar observations were also obtained upon comparing the kernelized versions of the two algorithms (see Appendix \ref{sec:moreexperiments}).

\begin{figure}[h]
\begin{center}
\includegraphics[width = 0.48\linewidth]{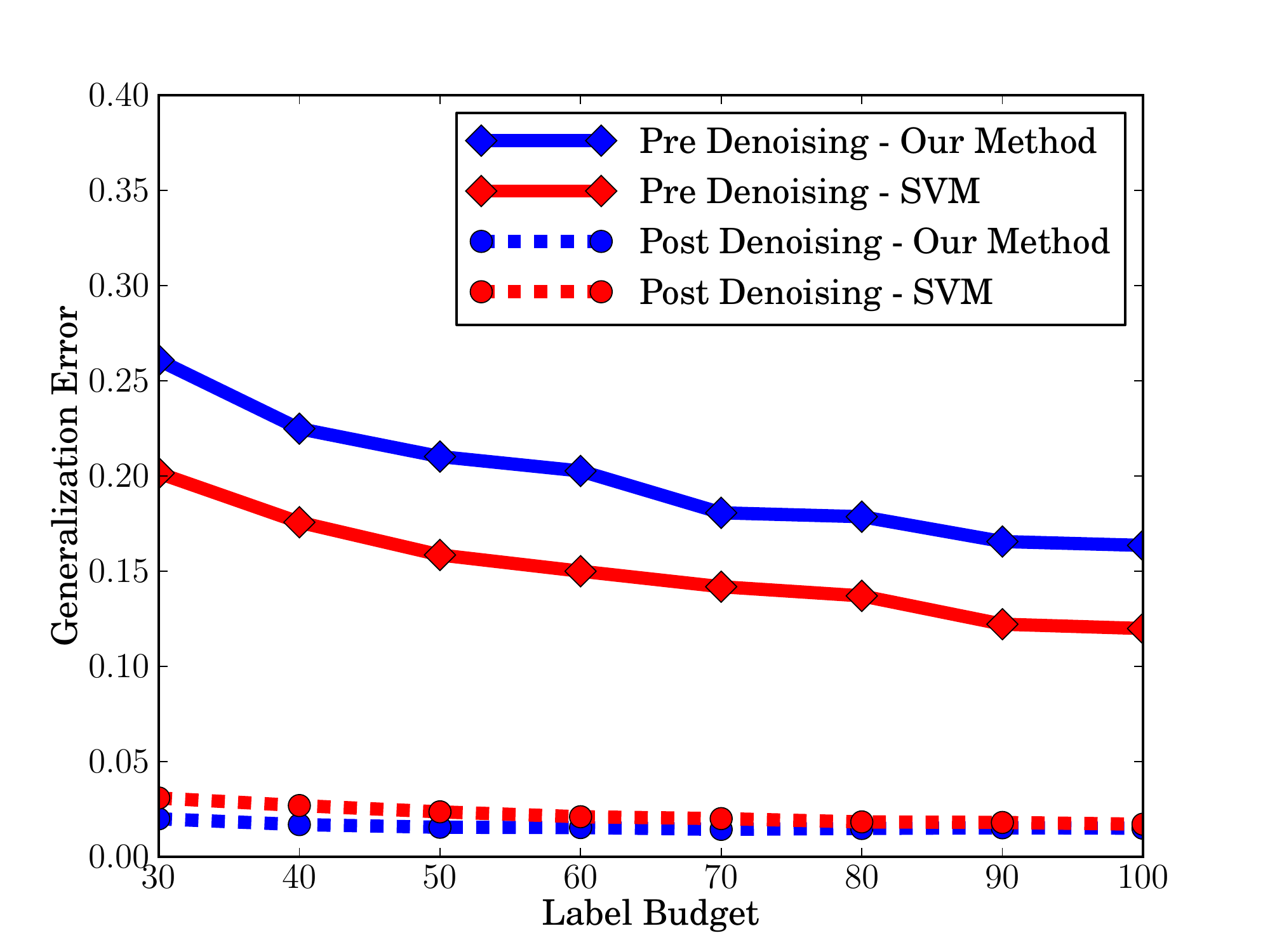}
\includegraphics[width = 0.48\linewidth]{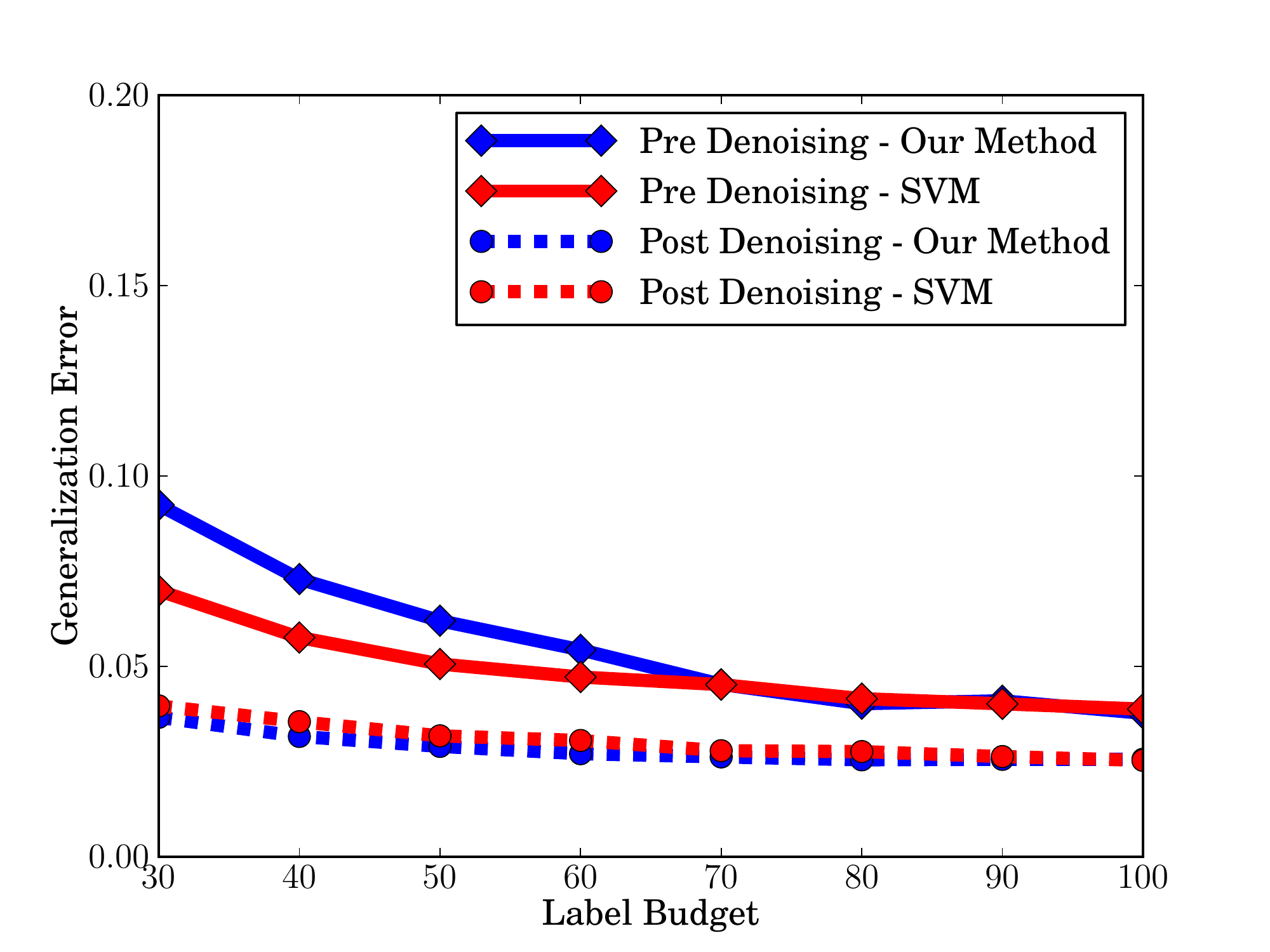}
\caption{Generalization error of the two learning methods with random noise at rate $\eta = 0.35$ (left) and pockets of noise at rate $\eta=0.15$ (right).}
\label{fig:error}
\end{center}
\end{figure}

\section{Discussion}
We demonstrate through theoretical analysis as well as experiments on synthetic data that local best-response
dynamics can significantly denoise a highly-noisy sensor network without destroying the underlying signal, allowing
for fast learning from a small number of label queries.   Another way to view this result is that the cost
function we {\em really} care about is that a sensor should get a cost of 1 for having a label that disagrees with the 
target function and a cost of 0 for a label that agrees with the target function, but unfortunately it cannot measure this
directly; so, instead we give each sensor a ``proxy'' objective that it {\em can} measure of agreeing with its neighbors.  
Our positive theoretical guarantees show that updating according to this proxy will perform well according to the true cost
function, and apply both to synchronous and random-order asynchronous updates.  This is borne out in
the experiments as well. Our negative result in Section \ref{sec:lower} for adversarial-order dynamics, in which
a left-to-right update order can cause the entire system to switch to a single label, raises
the question whether an alternative dynamics could be robust to adversarial update orders.  In 
Appendix \ref{sec:conservative} we present an alternative dynamics that we
prove is indeed robust to arbitrary update orders, but this dynamics is less practical because it requires substantially more
computational power on the part of the sensors.  It is an interesting question whether such general robustness can
be achieved by a simple practical update rule.  More generally, is there a different measurable, local, practical
proxy objective that would do an even better job than those considered here at optimizing the true error objective?    Another
natural direction is to explore the use of denoising protocols when tracking a target that is changing over time.

\subsection*{Acknowledgments}
This work was supported in part by ONR grant N00014-09-1-0751, NSF grants CCF-0953192,  CCF-1101283, and CCF-1101215, and AFOSR grant FA9550-09-1-0538.

\bibliographystyle{plain}
\bibliography{multi-agent-active}

\appendix

\section{Arbitrary Order and Conservative Best Response Dynamics}
\label{sec:conservative}
Given the negative result of Section \ref{sec:lower}, the basic best-response dynamics would
not be appropriate to use if no assumptions can be made about the
order in which sensors perform their updates.  To address this
problem, we describe here a modified dynamics that we
call {\em conservative} best-response.  The
idea of this dynamics is that sensors only change their state when
they are confident that they are not on the wrong side of the target
separator.  This dynamics is not as practical as regular best-response dynamics because
it requires substantially more computation on the part of the sensors.  Nonetheless,
it demonstrates that positive results for arbitrary update orders are indeed
achievable.
\begin{description}
\item[Conservative best-response dynamics:] In this dynamics,
sensors behave as follows:
\begin{enumerate}
\item If, for all linear separators through the sensor's location, a
majority of neighbors on both sides of the separator are positive, then flip to
positive.
\item If, for all linear separators through the sensor's location, a
majority of neighbors on both sides of the separator are negative, then flip to
negative.
\item Else (for some linear separator through the sensor's location,
the majority on one side is positive and the majority on the other
side is negative) then don't change.
\item To address sensors near the boundary of the unit sphere, in (1)-(3) we only consider
linear separators with at least $1/4$ of the points in their neighborhood on each side.
\end{enumerate}
\end{description}
\begin{theorem}\label{thm:safebr}
For absolute constants $c_3,c_4$, for $r \leq \frac{\gamma}{c_3\sqrt{d}}$
and $N \geq \frac{c_4}{(r/2)^d\gamma^2}\ln(\frac{1}{r^d\gamma\delta})$, in arbitrary-order
conservative best-response dynamics, each sensor whose $r$-ball does not intersect the target
separator will flip state correctly and no sensor will ever flip state incorrectly. 
\end{theorem}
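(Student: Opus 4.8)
The plan is to condition on a single high-probability ``good event'' obtained by an empirical-process argument and then run an induction over the (arbitrary) update order showing that \emph{every} flip that ever occurs is a flip to the correct label; the guarantee for sensors whose $r$-ball misses the separator then drops out as a corollary. For the good event, I would use the fact (from the proof of Theorem~\ref{thm:basicupper}) that every $r$-ball has probability mass at least $(r/2)^d$, combined with a relative/multiplicative Chernoff bound made uniform over the class of sets of the form (halfspace)\,$\cap$\,($r$-ball)\,$\cap$\,(unit ball) — a class of VC dimension $O(d)$. For $N$ as in the statement this yields, with probability $\ge 1-\delta$: (i) every sensor $x$ has $m_x = \Omega(\gamma^{-2}\ln(N/\delta))$ neighbors; and (ii) for every sensor $x$ and every such ``cap'' $C$ of measure at least $\tfrac14 (r/2)^d$, the fraction of sensors in $C$ whose \emph{initial} label is correct is at least $1-\eta-\gamma/2 = \tfrac12+\tfrac{\gamma}{2}$. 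Condition on this event from now on.

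The second ingredient is a geometric lemma justifying rule (4), and this is where I expect the real work to be. Fix a sensor $x$ on, say, the positive side of the target, and let $\ell^*$ be the hyperplane through $x$ parallel to the target; its ``far'' side (away from the target) lies entirely in the positive region. I claim $\ell^*$ is always an admissible separator, i.e.\ has at least $1/4$ of $x$'s neighbors on each side. Since $\ell^*$ passes through the center $x$ of $x$'s $r$-ball, in the unclipped case each side has exactly half the ball's measure. If the $r$-ball is clipped by the unit sphere then $\|x\| > 1-r$, and since the clipped case is only relevant when $d(x) < r \le \tfrac12$, we get $|\langle \hat x, \hat n\rangle| = d(x)/\|x\| = O(r)$, so the radial direction at $x$ is nearly orthogonal to the normal $\hat n$ of $\ell^*$; hence the clipping cuts $\ell^*$ transversely and removes at most an $O(r\sqrt d)$ extra fraction from either side — the same boundary-effect estimate already carried out in the footnote to Lemma~\ref{lem:buffernodes}. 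For $r \le \gamma/(c_3\sqrt d)$ this extra loss is below $1/4$, so each side of $\ell^*$ keeps $\ge 1/4$ of the measure, hence (by the good event) $\ge 1/4$ of $x$'s neighbors.

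The core is then an induction over the prefix of the update order with hypothesis: every flip so far is to the correct label. The base case is vacuous. For the step, suppose $x$ is about to move; take WLOG $x$ on the positive side and suppose for contradiction that $x$ flips to negative. By the dynamics this means that for \emph{every} admissible separator through $x$ — in particular $\ell^*$ — a majority of $x$'s current neighbors on both sides is negative. But the far cap $C$ of $\ell^*$ lies entirely in the positive region, so every sensor in $C$ has correct label ``positive''; by the inductive hypothesis no sensor in $C$ has ever flipped \emph{away} from positive, so the current number of positives in $C$ is at least its initial value, which by the good event exceeds $\tfrac12\,|C\cap\trainS|$. This contradicts the majority-negative condition on the far side of $\ell^*$; symmetrically for $x$ on the negative side. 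Hence $x$ does not flip incorrectly, completing the induction and establishing that no sensor ever flips incorrectly.

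Finally, let $x$ have its $r$-ball disjoint from the target, WLOG on the positive side, so that all of $x$'s neighbors lie in the positive region. By the induction just proved every flip is correct, so (as above, using monotonicity of the positive count plus the good event) the current number of positives in \emph{any} admissible cap around $x$ exceeds half the sensors in that cap. Thus for every admissible separator through $x$, both sides have a positive majority, so when $x$'s turn arrives rule (1) fires and $x$ becomes (or stays) positive, while it never becomes negative by the previous paragraph — so $x$ ends correctly labeled, as claimed. The only delicate point in the whole argument is the geometric lemma of the second paragraph, namely that $\ell^*$ stays admissible for sensors near the unit-sphere boundary; the empirical-process step and the induction are otherwise routine.
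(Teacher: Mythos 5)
Your overall strategy --- condition on a uniform-convergence event over caps of the form halfspace $\cap$ $r$-ball $\cap$ unit ball, then induct over the arbitrary update order to show no flip is ever incorrect, and deduce that sensors whose balls miss the separator flip correctly --- is the same as the paper's. But the step you yourself flag as the crux, the geometric lemma asserting that the hyperplane $\ell^*$ through $x$ parallel to the target is \emph{always} admissible (at least $1/4$ of $x$'s neighbors on each side), is false, and its justification is a non sequitur: you write that ``the clipped case is only relevant when $\mathrm{dist}(x) < r$,'' but the $r$-ball of $x$ is clipped by the unit sphere whenever $\|x\| > 1-r$, regardless of the distance $\mathrm{dist}(x)$ of $x$ to the target separator. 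For a concrete failure, take the target $x_1=0$ and a sensor $x$ near the point $(1,0,\dots,0)$ of the sphere: there $\hat x$ is nearly \emph{parallel} to the target normal $\hat n$, so $\ell^*$ is nearly tangent to the unit sphere at $x$, and the far side of $\ell^*$ contains a vanishing fraction (possibly none) of $x$'s neighbors, badly violating the $1/4$ rule. Your identity $|\langle \hat x, \hat n\rangle| = \mathrm{dist}(x)/\|x\| = O(r)$ is valid only when $\mathrm{dist}(x) = O(r)$, which is not guaranteed in your induction step, since it runs over all sensors. Because the dynamics quantifies only over admissible separators, the contradiction you derive from $\ell^*$ does not apply to such sensors, and the induction as written breaks exactly for boundary sensors far from the separator.

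The gap is fixable, and the fix is what the paper does: split by distance to the separator. If $\mathrm{dist}(x) > r$, you do not need $\ell^*$ at all --- every admissible cap around $x$ lies entirely on $x$'s (correct) side of the target, so by your good event (applied to admissible caps) together with the inductive hypothesis, \emph{every} admissible separator has a correct majority on both sides; hence $x$ cannot flip incorrectly, and will flip correctly when its turn comes --- this is precisely the argument of your own final paragraph. The parallel hyperplane $\ell^*$ is needed only for sensors with $\mathrm{dist}(x) \le r$, whose caps straddle the target, and for exactly those sensors your $O(r\sqrt{d})$ transversality estimate borrowed from the footnote to Lemma~\ref{lem:buffernodes} is valid and yields admissibility of $\ell^*$ when $r \le \gamma/(c_3\sqrt{d})$; this matches the paper's parenthetical treatment of boundary sensors. (It is also worth one line to note that at least one admissible separator always exists --- e.g.\ any hyperplane through $x$ containing the radial direction splits the clipped ball evenly --- so the ``for all admissible separators'' conditions are never vacuous.) With that case split, your argument coincides with the paper's proof.
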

Thus, Theorem \ref{thm:safebr} contrasts nicely with the negative result
in Theorem \ref{thm:lower} for standard best-response dynamics and
shows that the potential difficulties of arbitrary-order dynamics no
longer apply.

\begin{proof}
We will show that for the given value of $N$, with high probability we
have the following initial conditions: for each of the $N$ sensors,
for all hemispheres of radius $r$ centered at that sensor, the
empirical fraction of points in that hemisphere that are labeled
positive is within a $\gamma = 1/2 - \eta$ fraction of its
expectation.  This implies in particular that at the start of the
dynamics, all such hemispheres that are fully on the
positive side of the target separator have more positive-labeled
sensors than negative-labeled sensors, and all such hemispheres that
are fully on the negative side of the target separator have more negative-labeled
sensors than positive-labeled sensors.  This in turn implies, by
induction, that in the course of the dynamics, no sensor {\em ever}
switches to the incorrect label.  In particular, if we consider the
hyperplane through a sensor that is parallel to the target and consider
the hemisphere of neighbors on the ``good side'' of this hyperplane,
by induction none of those neighbors will have ever switched to the
incorrect label, and so their majority will remain the correct label,
and so the current sensor will not switch incorrectly by definition of
the dynamics.  In addition, it implies that all sensors whose $r$-balls
do not intersect the target separator {\em will} flip to the correct
label when it is their turn to update.

To argue this, for any fixed sensor and its neighborhood $r$-ball, since the 
VC-dimension of linear separators in $R^d$ is $d+1$, so long as we see
$$m \geq \frac{c}{\gamma^2}\left[d\ln(1/\gamma) + \ln(N/\delta)\right]$$
points inside that $r$-ball (for sufficiently large constant $c$), 
with probability at least $1-\delta/N$, for
any hyperplane through the center of the ball, the number of
positives in each halfspace will be within $\gamma m/8$ of the
expectation, and the number of negatives in each halfspace will be
within $\gamma m/8$ of the expectation.
This means that if the halfspace is fully on the positive side of the
target, then we see more positives than negatives, and if it is fully
on the negative side of the target then we see more negatives than
positives.  (In the case of sensors near the boundary of the unit
ball, this holds true for all with sufficiently many points, which includes the
halfspace defined by the hyperplane parallel to the target separator
if the sensor is within distance $r$ of the target separator
for $r < \frac{\gamma}{c_3\sqrt{d}}$.)
We are using $\delta/N$ as the failure probability so we
can do a union bound over all the $N$ balls.  Finally, we solve for
$N$ to ensure the above guarantee on $m$ to get
$$N \geq \frac{c_4}{(r/2)^d\gamma^2}\ln\left(\frac{1}{r^d\gamma\delta}\right)$$
points suffice for some constant $c_4$.
\end{proof}

\section{Additional Proofs}\label{sec:proofs}

\begin{proofofof}{Theorem \ref{thm:lower}}
Suppose the labeling is given by $\sign(w\cdot x)$.  We show that if sensors are updated in increasing order of $w\cdot x$ (from most negative to most positive) then with high probability all sensors will update to negative labels.

Consider what we see when we come to update the sensor at $x$.  Assuming we have not yet failed (given a positive label), all of the points $x'$ with $w\cdot x' < w\cdot x$ are labeled negative, while those with $w\cdot x' > w\cdot x$ are unchanged from their original states, and so are still labeled with independent uniform noise.  As in the proof of Theorem \ref{thm:basicupper}, we apply Bernstein's theorem to the difference $\Delta$ between the number of negative and positive points in the neighborhood of $x$, which we write as a sum of $(N-1)$ independent variables $\Delta_i$.  The expected labels of the nearby points depend on the location of $x$, so we consider three regions: $w\cdot x \leq -r$, $w\cdot x \geq 0$, and $-r < w\cdot x < 0$.

Let $V$ denote the probability mass of the ball of radius $r$ around $x$.  In all cases the variance is bounded by $\Var[\Delta_i] \leq \E[\Delta_i^2] = V \leq r^d$.

In the first region ($w\cdot x \leq -r$) we can use the same analysis from Theorem \ref{thm:basicupper} to find that $\E[\Delta_i] \leq -V(1-2\eta) \leq -(r/2)^d(1-2\eta)$, since the ball around $x$ never crosses the separator and any sensors previously updated to negative labels cannot hurt.

In the second region ($w\cdot x \leq 0$) we can use a similar analysis, bounding
$$\E[\Delta_i] \leq -V/2 + (1-\eta) V/2 = -\eta V/2 \leq -\tfrac{1}{2}(r/2)^d,$$
since the measure of the (positive biased) half of the ball further from the separator than $x$ is never larger than the measure of the remaining (all negative) half of the ball.

In the final region ($0 < w\cdot x < r$), we must take a little more care, as the measure of the all-negative half of the ball may be less than the measure of the unexamined side, which may be positive-biased due to crossing the separator.  To analyze this case, we project onto the 2-dimensional space spanned by $x$ and $w$.  The worst case is clearly when $x$ is on the surface of the ball, as shown in Figure \ref{fig:lower}.

\begin{figure}
\begin{center}
\begin{tikzpicture}[very thin, scale=.9]
\usetikzlibrary{patterns}

\clip[] (-5,0) rectangle (5,6.5);
\path (0,5) arc (90:100:5cm) node (c) {};
\node at (.2,.2) {$O$};
\node[color=red] at (-2,2) {$-$};
\node[color=blue] at (2,2) {$+$};
\node at (102:5.2) {$x$};
\node at (0.2,2.5) {$A$};
\path (c) +(-92:1.7) node {$B$};
\path (c) +(-54:1.8) node {$C$};

\begin{scope} 
\clip (0,0) circle (5cm);
\path[fill=red!50] (c) +(0,1.5) arc (90:270:1.5);
\end{scope}

\begin{scope} 
\clip (0,0) circle (5cm);
\path[fill=blue!15] (c) +(125:1.5cm) arc (125:-55:1.5);
\end{scope}

\begin{scope} 
\clip (0,0) circle (5cm);
\clip (0,0) rectangle (1,7);
\path[fill=blue!50] (c) +(0,1.5) arc (90:-90:1.5);
\end{scope}


\draw (0,0) circle (5cm); 
\draw[thick] (5,0) -- (-5,0);
\draw (0,0) -- (0, 7); 

\draw (c) circle (1.5cm); 
\draw (c) +(0,-1.5) -- +(0,1.5); 

\draw (0,0) -- (c.center);
\draw[thick] (c.center) -- +(-55:1.5cm); 
\begin{scope}
\clip(0,0) rectangle (-3,7);
\draw[thick] (c.center) -- +(-70:3cm); 
\end{scope}

\draw (c.center) -- +(199:1.5cm);
\draw (c.center) -- +(1:1.5cm);

\draw (c.center) +(199:.4cm) arc (199:270:.4cm); 
\path (c) node at +(235:.65) {$\beta$} node at +(-28:.6) {$\alpha$};
\draw (c.center) +(-55:.4cm) arc (-55:1:.4cm); 
\draw (0,1) arc (90:100:1);
\node at (95:1.2) {$\theta$};

\end{tikzpicture}
\caption{A ball around $x$ intersecting the decision boundary and the boundary of the unit ball.\label{fig:lower}}
\end{center}
\end{figure}
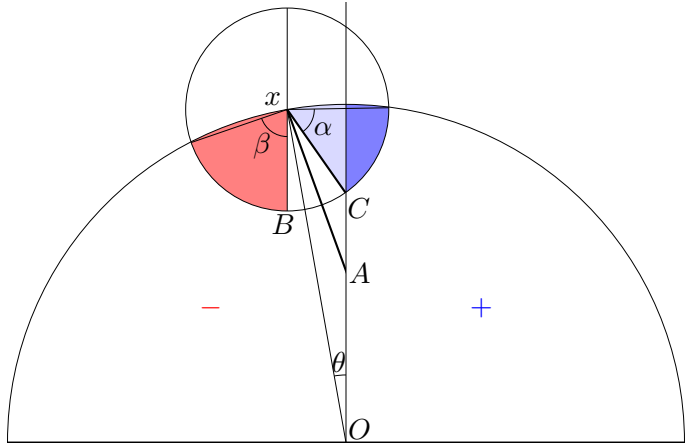

Any point in the red region is known to have a negative label, while points in the dark blue region are biased towards positive labels.  We first show that the red region is bigger by showing that the angle $\alpha$ subtended by the dark blue region is smaller than the angle $\beta$ of the red region.  Construct the segment $\overline{xA}$ by reflecting the segment $\overline{xB}$ about the line $\overline{xO}$ and extending it to the separator.  Note that the angle $\angle OxA$ is the same as the angle $\theta$ between $x$ and the separator.  We find that $\alpha \leq \beta$ precisely when $xA \geq xC = r$.  Indeed, by considering the isosceles triangle $\triangle AxO$ we see that $xA = 1/(2\cos\theta)\geq 1/2$.  So as long as $r\leq 1/2$ we have $\beta \geq \alpha$.
Thus, since the projection of the uniform distribution over the unit ball onto this plane is radially symmetric, the red region
has more probability mass than the blue region.

We can now calculate for this case
\begin{align*}
	\E[\Delta_i] &\leq (-1)[\text{measure of red}] + (1 - 2\eta)[\text{measure of blue}] + (2\eta - 1)[\text{measure of white}]\\
	&\leq -2\eta[\text{measure of red}].
\end{align*}
Note that although the projection does not make sense for $d=1$ the result obviously still holds (as there are no points near both the separator and the boundary of the unit ball). We can lower bound the measure of the red region by the measure of the sphere inscribed in the sector, which has radius at least $cr$ for some $0 < c < 1/2$ as long as $r \leq 1/2$ (since $\beta$ is bounded away from 0 in this range of $r$).

Now we see that for any $x$ the expected label satisfies
$$ \E[\Delta_i] \leq -\tfrac{1}{2} (cr)^d \min(\eta, \tfrac{1}{2} - \eta).$$
Letting $\phi = \min(\eta, \frac{1}{2} - \eta)$, we find that the probability of giving a positive label on any given update is
\begin{align*}
  \Pr[\Delta \geq 0] 
  &\leq \exp\left(\frac{-\frac{1}{4}(N-1)^2 (cr)^{2d} \phi^2 / 2}{(N-1) r^d + (N-1) (cr)^d \phi / 3}\right)\\
  &= \exp\left(\frac{-\frac{1}{4} (N-1) (cr)^{d} \phi^2}{1 + \phi / 3}\right)\\
  &= \exp\left(-(N-1)(cr)^d \phi^2/8\right)
\end{align*}
By the union bound, we find that
$$ N \geq \frac{16}{(cr)^d \phi^2}\bigg(\ln\frac{8}{(cr)^d\phi^2} + \ln \frac{1}{\delta}\bigg) $$
suffices to ensure that with probability at least $1-\delta$ all sensors are updated to negative labels.
\end{proofofof}

\begin{note} If $r = O(1/\sqrt{d})$ then we can lower bound all of the relevant measures in the preceding proof by $\Theta(r^d)$ rather than $(\Theta(r))^d$, to see that
\[ N \geq \Omega\left(\frac{1}{r^d \phi^2} \left(\ln \frac{1}{r \phi} + \ln \frac{1}{\delta}\right)\right)\]
suffices.
\end{note}

\section{Additional Experimental Results}\label{sec:moreexperiments}

All of the following experiments were run with initial noise rate $\eta = 0.35$ for random noise and $\eta = 0.15$ for pockets of noise, and the results have been averaged over 20 trials of 50 iterations each.

\textbf{Effect of number of sensors on denoising and learning.}

We analyze the performance of learning post-denoising as a function of the number of sensors for a fixed radius.  Given the results of Theorem 2 in Section 3.1 for synchronous updates, we expect the denoising to improve as sensors are added, which in turn should improve the generalization error post-denoising.  Figures 5, 6, and 7 show the generalization error pre- and post-denoising for $N\in\{1000, 5000, 25000\}$.  For a budget of 30 labels on random noise, the noise rate after denoising drops from 12.0\% with 1000 sensors to 1.7\% with 25000 sensors, and with this improvement we see a corresponding drop in generalization error from 7.4\% to 1.6\%.  Notice that denoising helps for both active and passive learning in all scenarios except for the case of pockets of noise with 1000 sensors, where the sensor density is insufficient for the denoising process to have significant effect.

\begin{figure}
\begin{center}
\includegraphics[width = 0.48\linewidth]{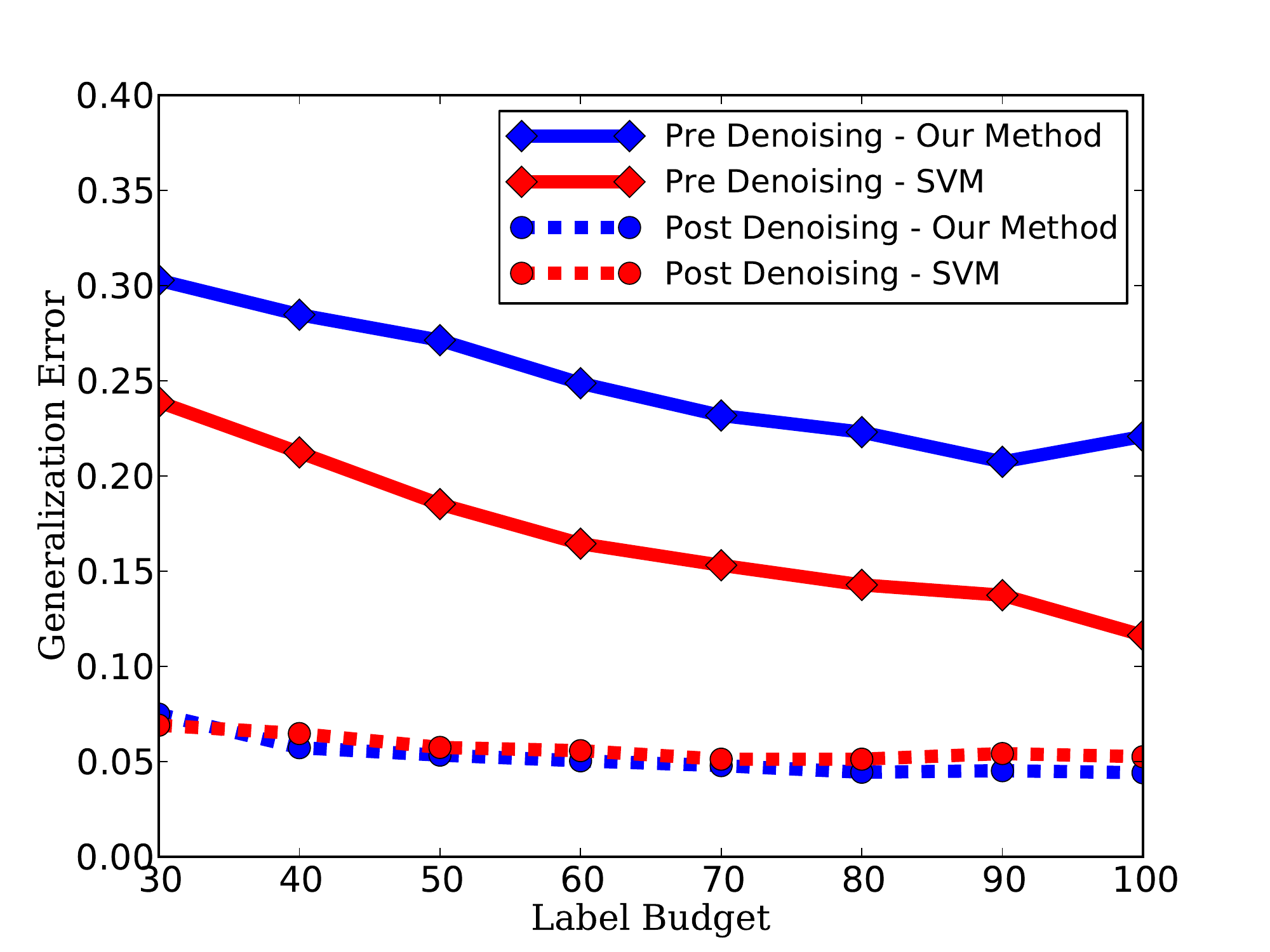}
\includegraphics[width = 0.48\linewidth]{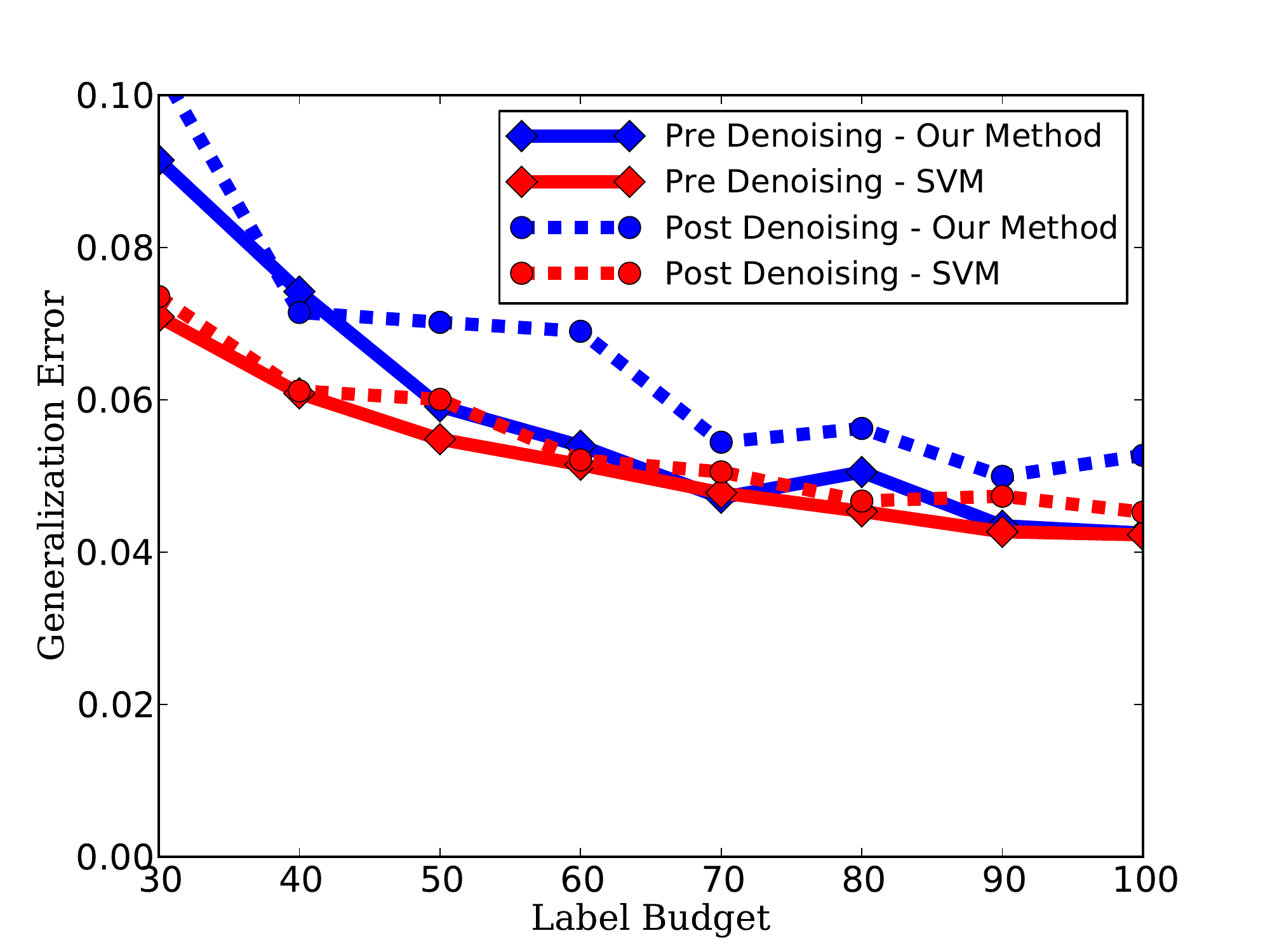}
\caption{Generalization error with 1000 sensors in different noise scenarios. Generalization error in y-axis and Labels used in x-axis. From Left to Right - Random Noise, and Pockets of Noise.\label{fig:1000}}
\end{center}
\end{figure}

\begin{figure}
\begin{center}
\includegraphics[width = 0.48\linewidth]{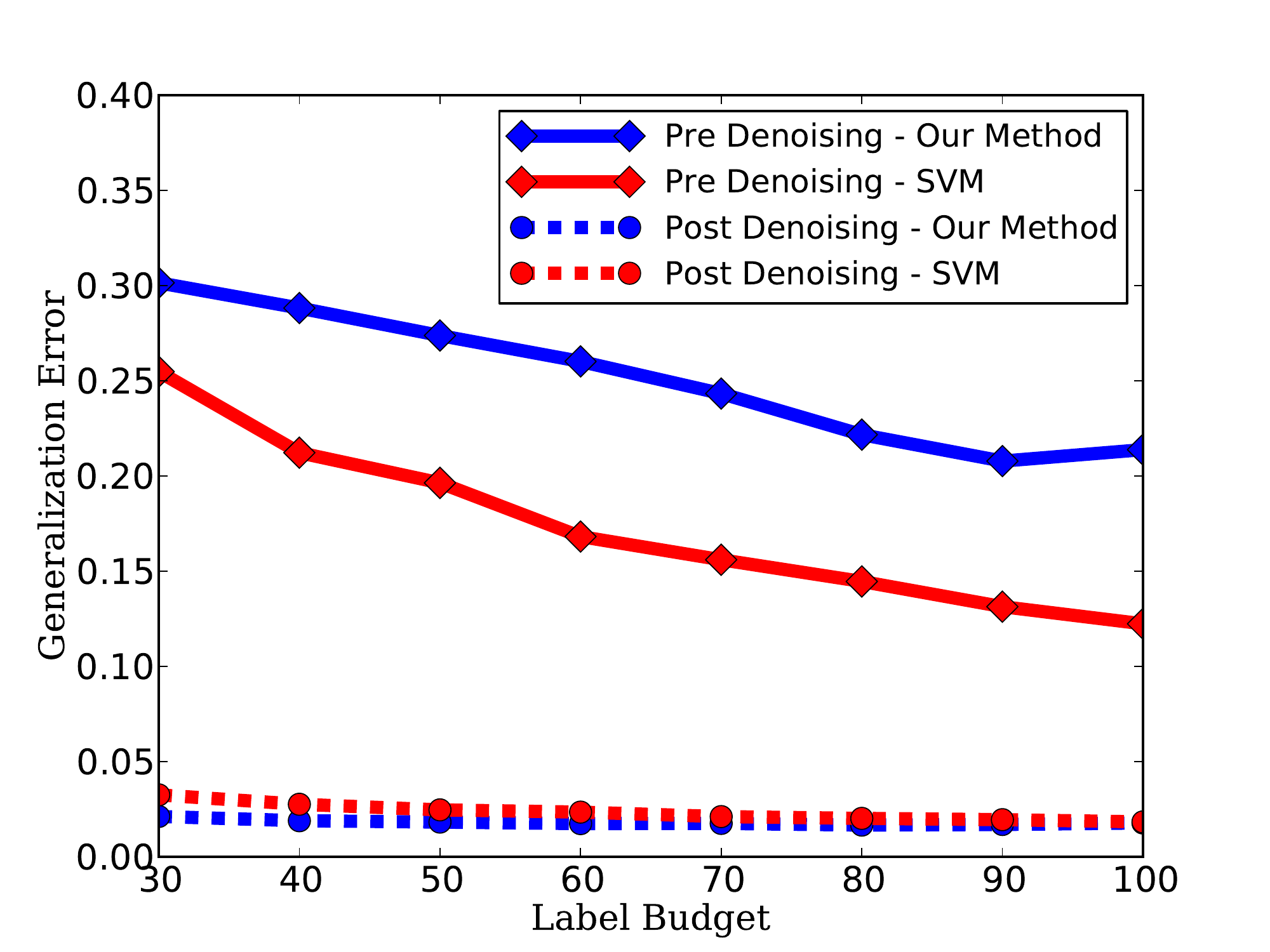}
\includegraphics[width = 0.48\linewidth]{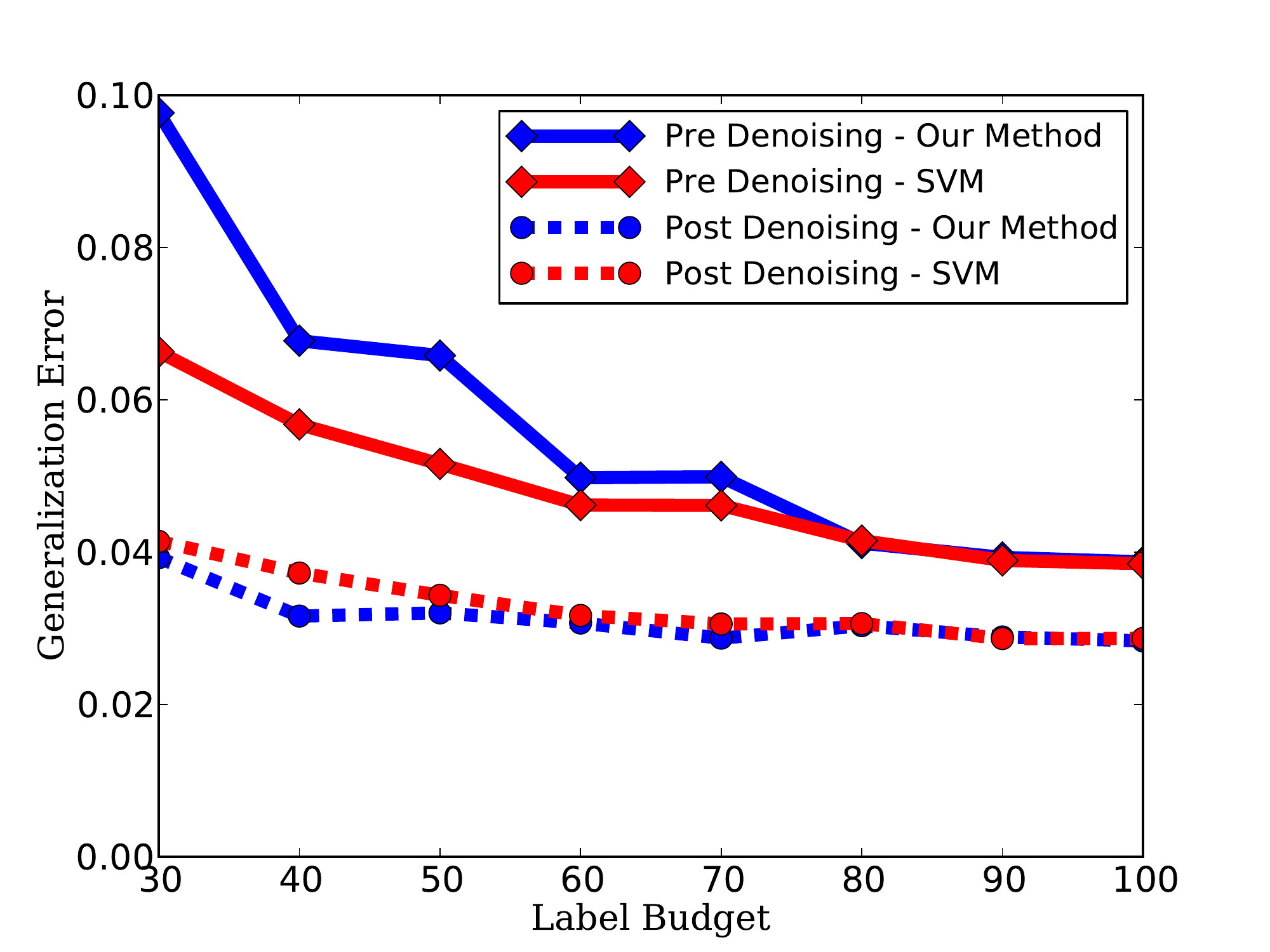}
\caption{Generalization error with 5000 sensors in different noise scenarios. Generalization error in y-axis and Labels used in x-axis. From Left to Right - Random Noise, and Pockets of Noise.\label{fig:5000}}
\end{center}
\end{figure}

\begin{figure}
\begin{center}
\includegraphics[width = 0.48\linewidth]{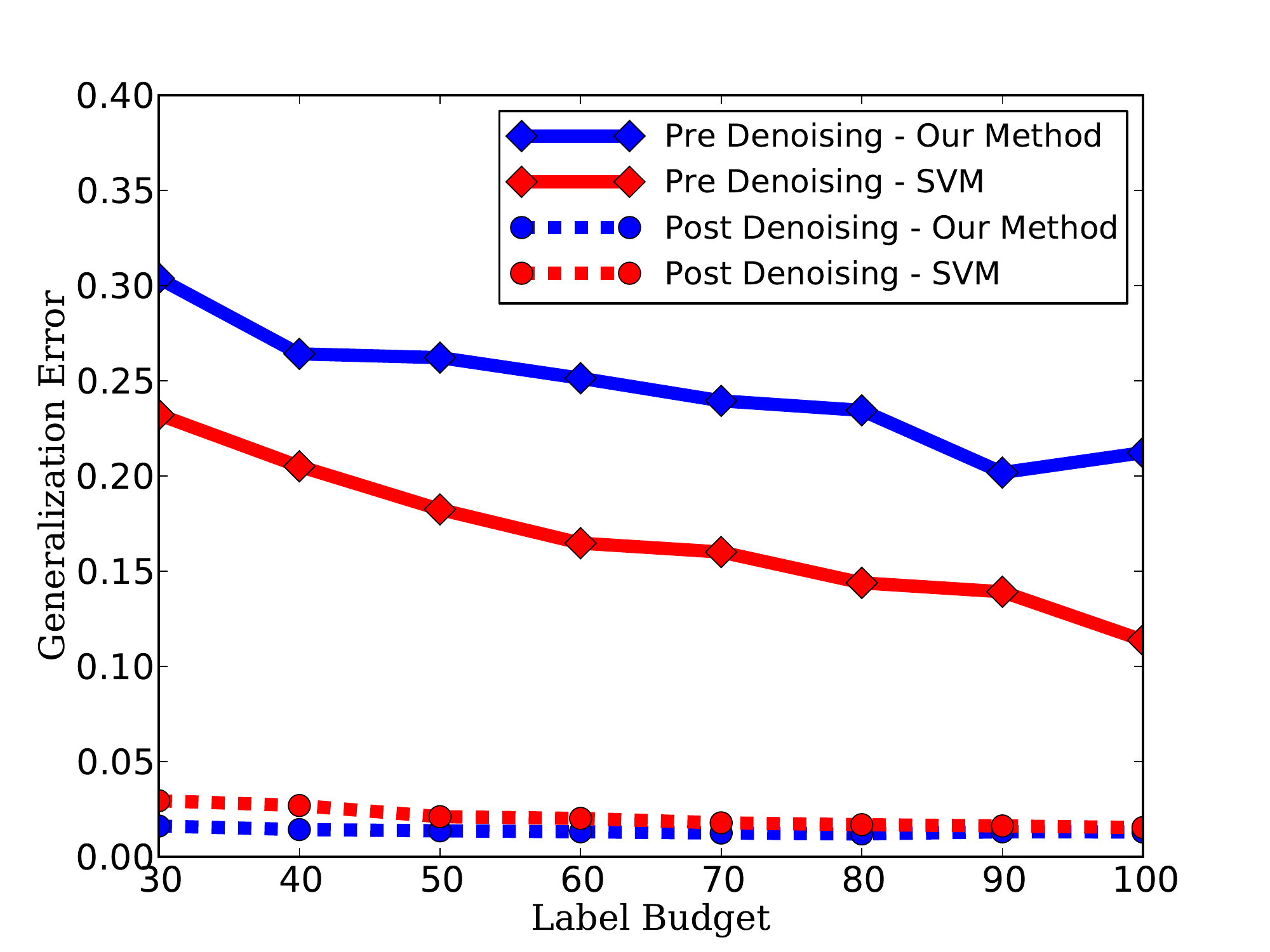}
\includegraphics[width = 0.48\linewidth]{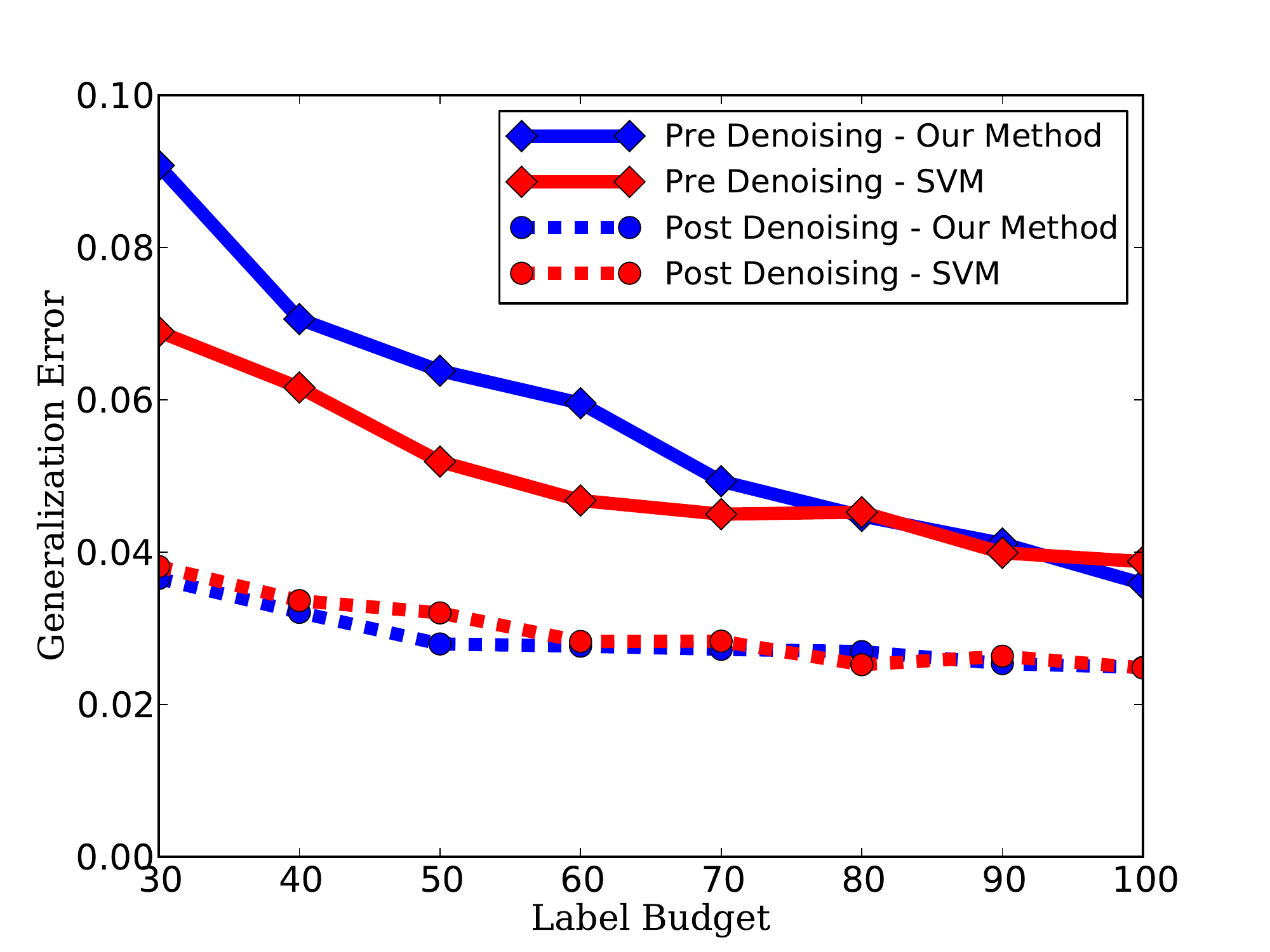}
\caption{Generalization error with 25000 sensors in different noise scenarios. Generalization error in y-axis and Labels used in x-axis. From Left to Right - Random Noise, and Pockets of Noise.\label{fig:25000}}
\end{center}
\end{figure}

\textbf{Effect of communication radius on denoising and learning.}

We analyze here the performance of learning post-denoising as a function of the communication radius for a fixed number of sensors.  In light of Theorem 2 we expect a larger communication radius to improve the effectiveness of denoising.  Figures 8, 9, and 10 show the generalization error pre- and post-denoising for $r \in \{0.2, 0.05, 0.025\}$ with 10000 sensors.  Here denoising helps for both active and passive learning in all scenarios.

\begin{figure}
\begin{center}
\includegraphics[width = 0.48\linewidth]{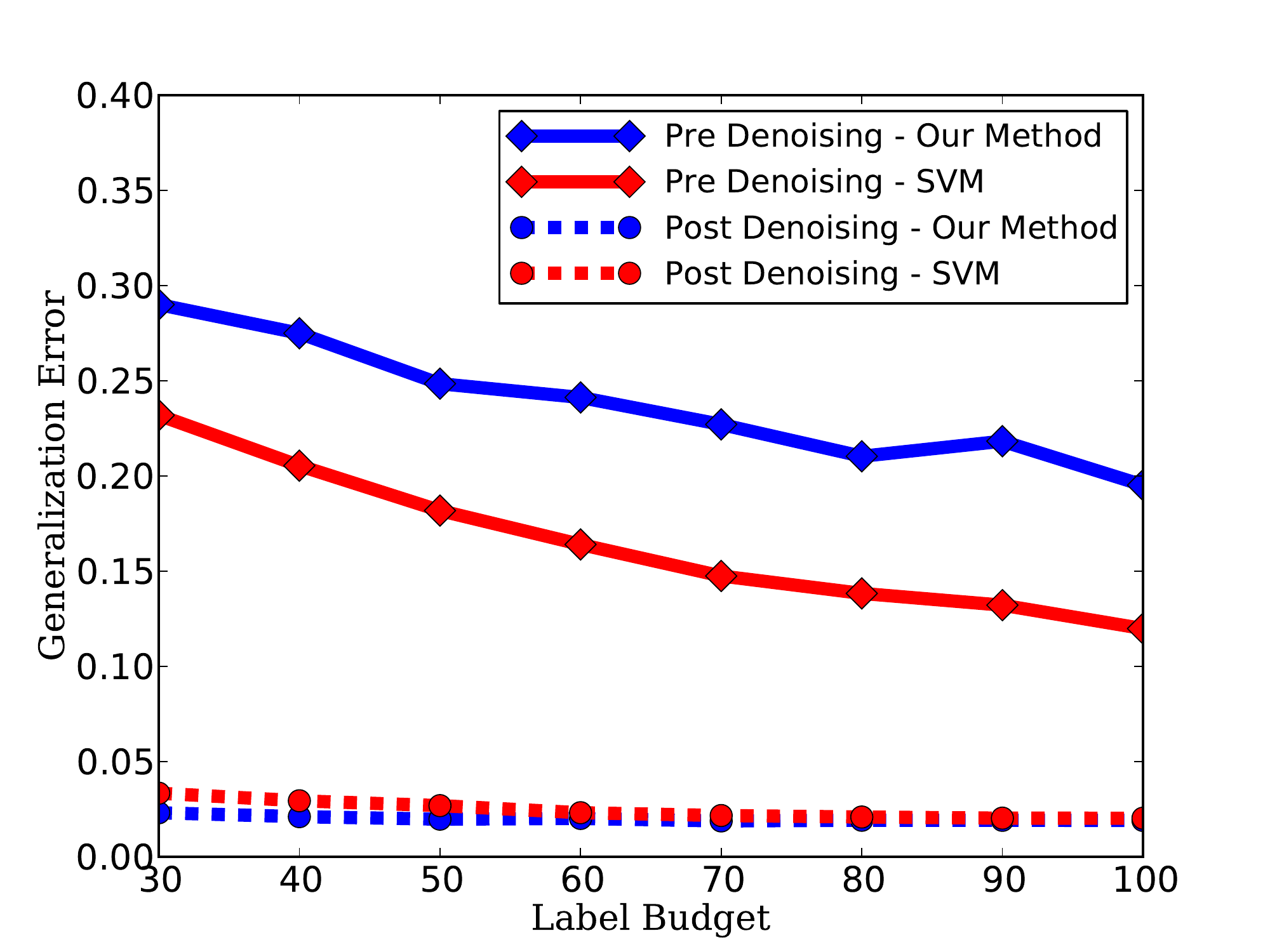}
\includegraphics[width = 0.48\linewidth]{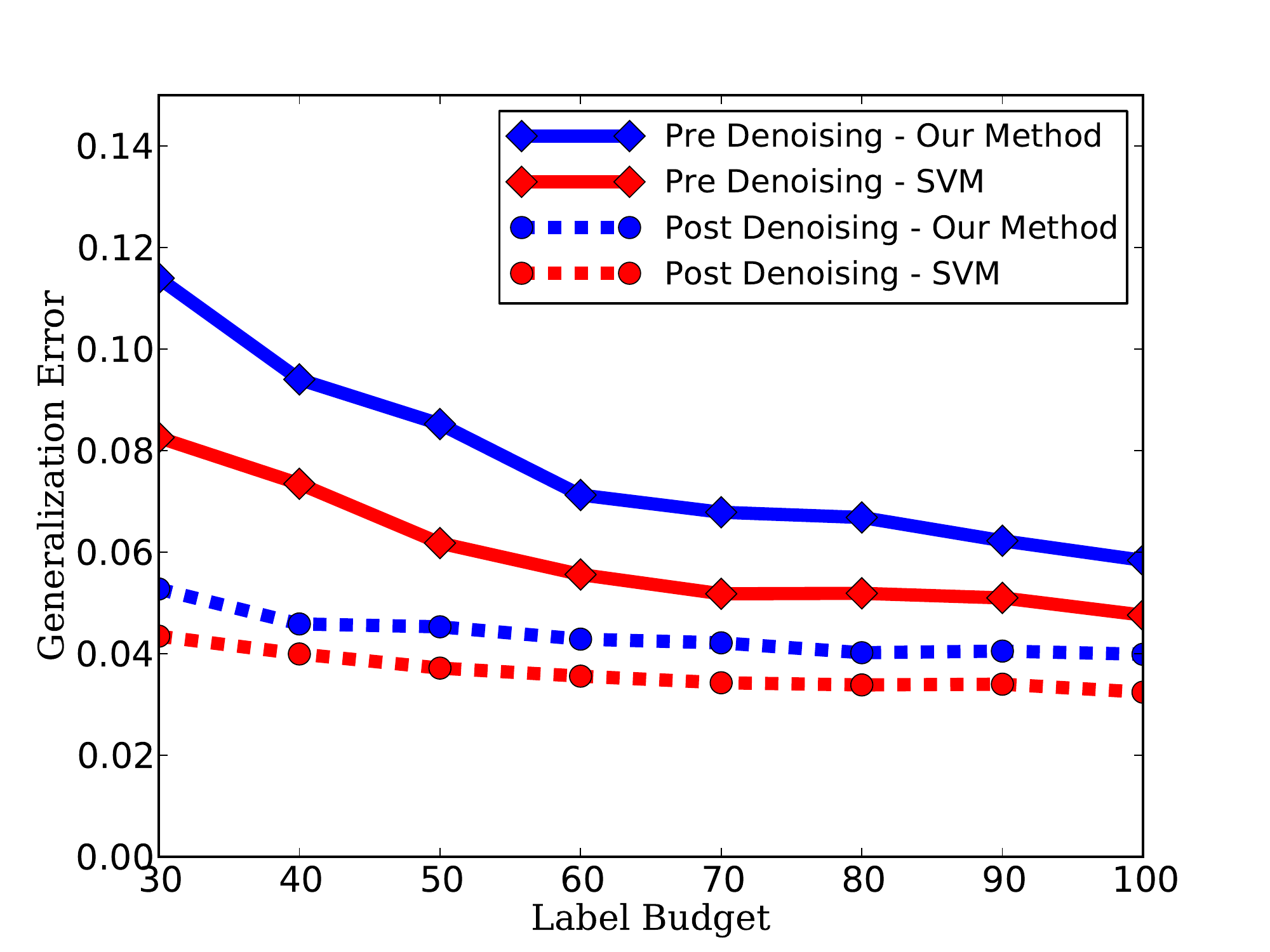}
\caption{Generalization error with connectivity radius of 0.2 and 10,000 sensors in different noise scenarios. Generalization error in y-axis and Labels used in x-axis. From Left to Right - Random Noise, and Pockets of Noise.\label{fig:rad2}}
\end{center}
\end{figure}

\begin{figure}
\begin{center}
\includegraphics[width = 0.48\linewidth]{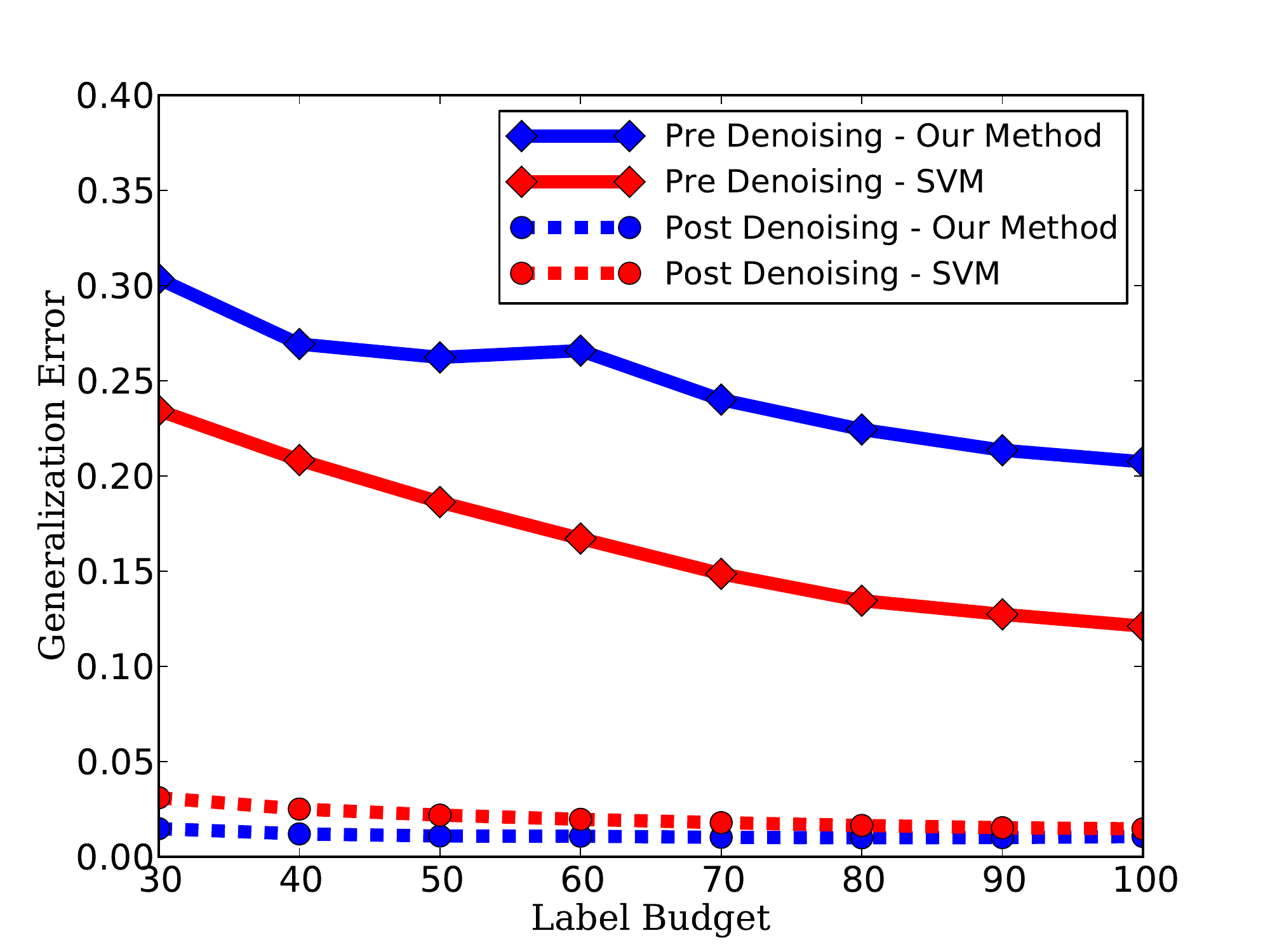}
\includegraphics[width = 0.48\linewidth]{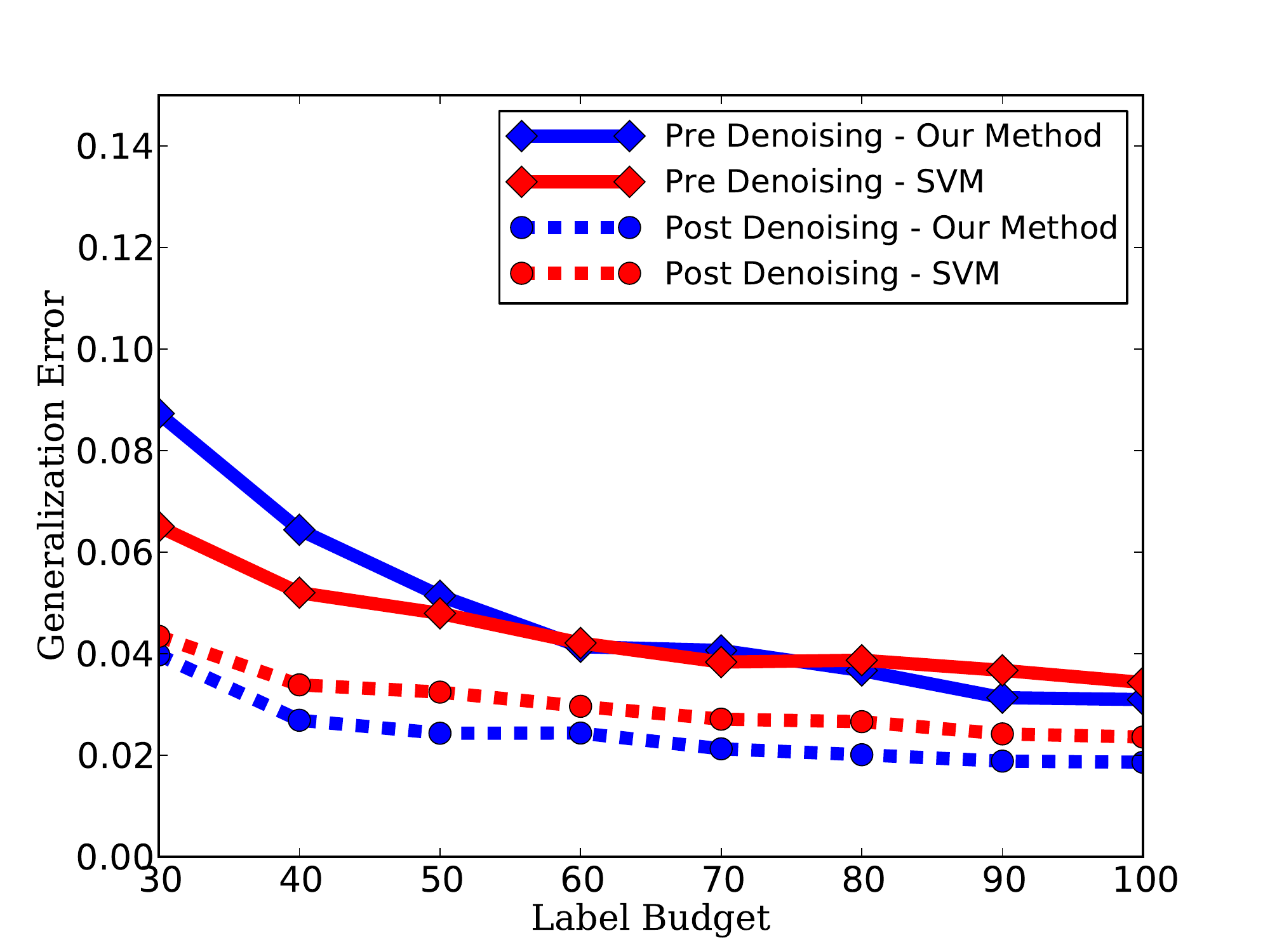}
\caption{Generalization error with connectivity radius of 0.05 and 10,000 sensors in different noise scenarios. Generalization error in y-axis and Labels used in x-axis. From Left to Right - Random Noise, and Pockets of Noise.\label{fig:rad05}}
\end{center}
\end{figure}

\begin{figure}
\begin{center}
\includegraphics[width = 0.48\linewidth]{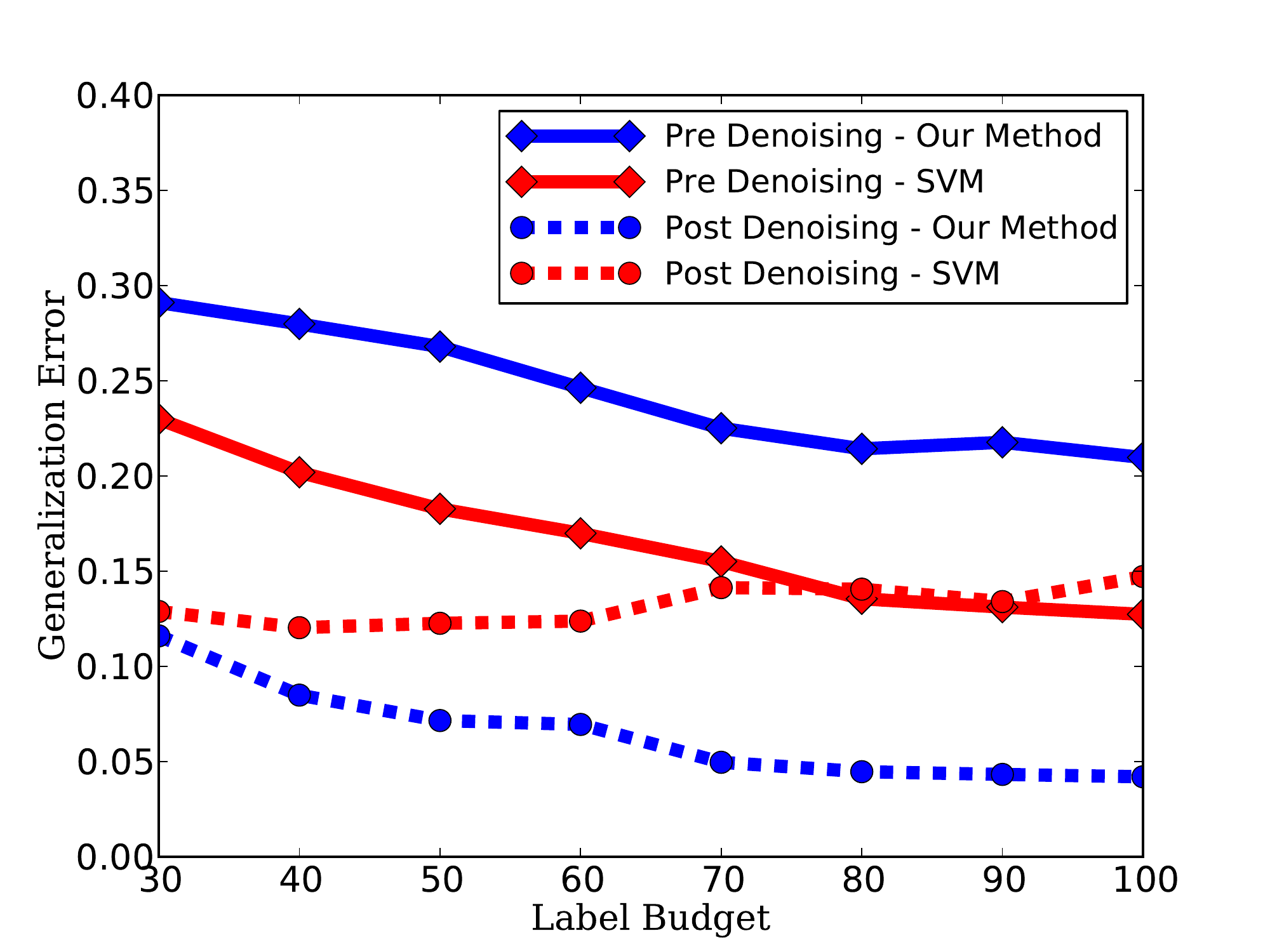}
\includegraphics[width = 0.48\linewidth]{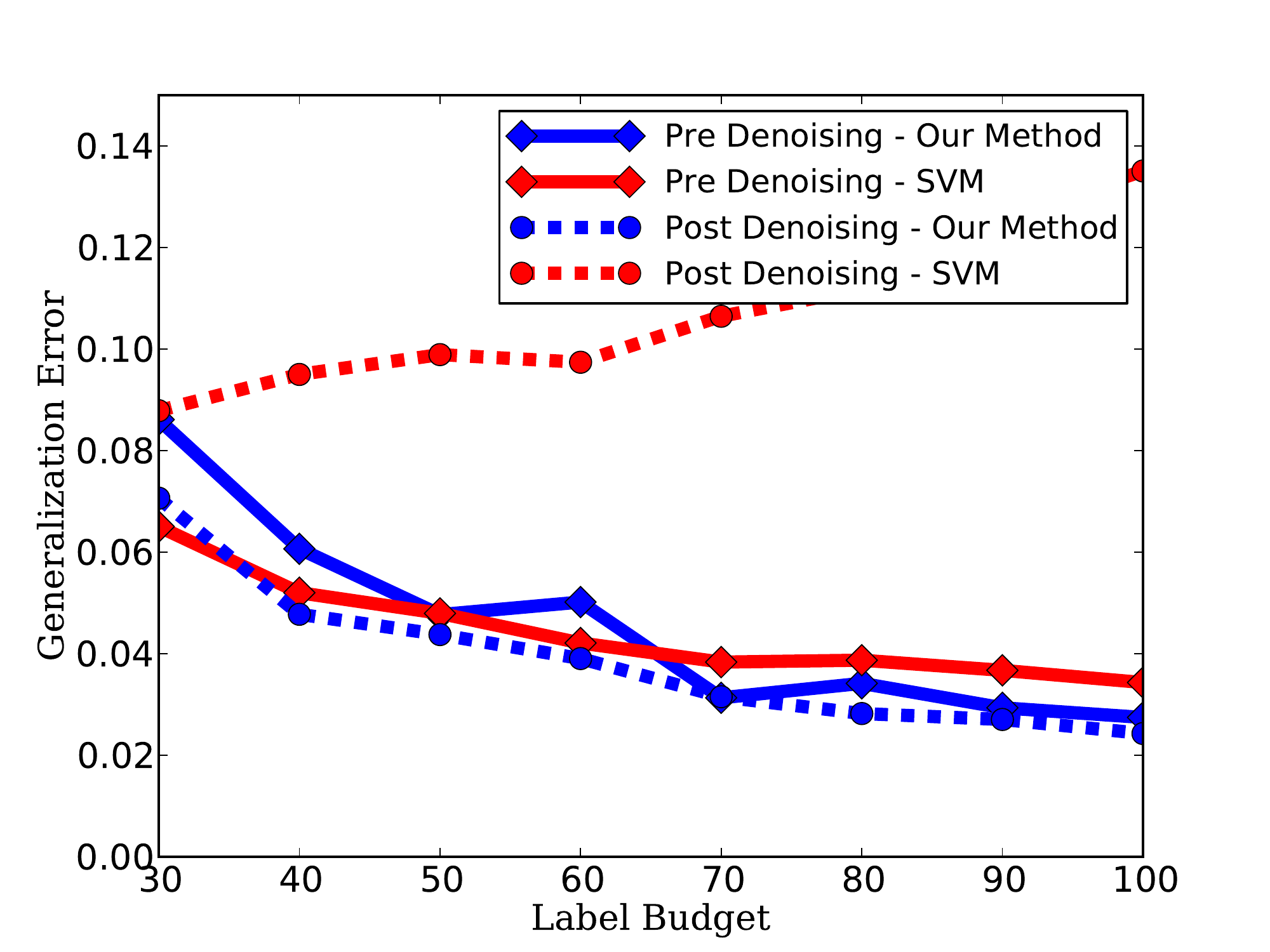}
\caption{Generalization error with connectivity radius of 0.025 and 10,000 sensors in different noise scenarios. Generalization error in y-axis and Labels used in x-axis. From Left to Right - Random Noise, and Pockets of Noise.\label{fig:rad025}}
\end{center}
\end{figure}

\subsection{Kernelized Algorithm Derivation and Results}


\textbf{Derivation of dual with a linear ball constraint}

In order to be able to replace inner products with kernel evaluations in the dual program of the hinge loss minimization step, we replace the ball constraint given by $\|w_k - w_{k+1}\|_2 \leq r_k$ with an equivalent linear constraint $w_k w_{k-1} \geq 1-r_k^2/2$.

\begin{align*}
L =  \sum_{i=1}^{n}{\xi_i}
  - \sum_{i=1}^{n}{\alpha_i (y_i(w_k x_i)}/\tau_k -1 + \xi_i) \\
  + \beta (1 - r_k^2/2 - w_k w_{k-1}) \\ 
  + \gamma( \| w_k \|_2 - 1) - \sum_{i=1}^{n}{\delta_i\xi_i}
\end{align*}

To obtain the dual formulation, we take derivation of the above equation w.r.t to $w_k$ and $\xi$ and substitute these values in the original formulation we obtain


\begin{program} \label{prog:dual_linear_ball}
  \max_{\alpha, \beta, \gamma} \sum_{i=1}^{n}{\alpha_i} - 1/4\gamma\tau_k^2 * \sum_{i=1}^{n}\sum_{j=1}^{n}{\alpha_i \alpha_j y_i y_j x_i x_j} 
  \\ - \beta w_{k-1}/2\tau_k\gamma * \sum_{i=1}^{n}{\alpha_i y_i x_i} 
  \\ - \beta^2 w_{k-1}^2 /4\gamma + \beta (1-r_k^2/2) - \gamma \\ \\
  \text{s.t} \\
  0 \leq \alpha \leq 1  \\
  \beta, \gamma \geq 0
\end{program}

In (\ref{prog:dual_linear_ball}) the lagrangian variable $\gamma$ is present as a denominator in three terms which are negative and as a quantity being subtracted in the objective function. Thus the edge values $0,\inf$ of $\gamma$ would lead to decrease in the objective value and cannot lead to maximum. Thus the maximum value of objective function will be found at $\gamma$ evaluated at $\partial L/\partial \gamma = 0$. Taking the derivative of \ref{prog:dual_linear_ball} w.r.t $\gamma$ we get

\begin{equation}
\gamma = \sqrt{1/\tau_k^2 \sum_{i=1}^{n}\sum_{j=1}^{n}{\alpha_i \alpha_j y_i y_j x_i x_j} \\ 
+ \beta w_{k-1}/2\tau_k + \beta^2 w_{k-1}^2 /4}
\end{equation}

Substituting this value in the (\ref{prog:dual_linear_ball}) and simplifying gives us -

\begin{program} \label{prog:dual_linear_ball_final}
  \min_{\alpha, \beta} \sqrt {\sum_{i=1}^{n}\sum_{j=1}^{n}{\alpha_i \alpha_j y_i y_j x_i x_j} + 2\tau_k \beta w_{k-1} (\sum_{i=1}^{n}{\alpha_i y_i x_i}) + (\beta \tau_k w_{k-1})^2}
  \\ - \tau_k \sum_{i=1}^{n}{\alpha_i} - \tau_k \beta (1-r_k^2/2) \\
  \text{s.t} \\
  0 \leq \alpha \leq 1  \\
  \beta \geq 0
\end{program}

The term under the square root in (\ref{prog:dual_linear_ball_final}) can be simplified as $(\sum_{i=1}^{n}{\alpha_i y_i x_i} + \beta \tau_k w_{k-1})^2$, which further simplifies equation (\ref{prog:dual_linear_ball_final}) to

\begin{program} 
  \max_{\alpha, \beta}  \tau_k\sum_{i=1}^{n}{\alpha_i} + \tau_k\beta (1-r_k^2/2) \\
 - \norm{\sum_{i=1}^{n}{\alpha_i y_i x_i} + \beta \tau_k w_{k-1}} \\
  \text{s.t} \\
  0 \leq \alpha \leq 1  \\
  \beta \geq 0
\end{program}

The resulting optimization objective function can be implemented by expanding out the two norm and value of previous weight vector as $\sum_{l=1}^{p}{\alpha_l y_l x_l}$.

\textbf{Results for kernelized algorithm}

We also test the improvement of the active learning method for non-linear decision boundaries. The target decision boundary 
is a sine curve on the horizontal axis in $R^2$ space, with points above the curve labeled as positive, else negative. Noise was introduced in the true labels through methods described in Section 5.1. For comparison with passive methods we calculate the classification error over 20 trials, where in each trial we average results over 20 iterations. Both the active and passive algorithms use a Gaussian kernel with bandwidth of 0.1 for a smooth estimate of the the boundary. All other parameters remain the same. Results are shown in Figures 6. Notice that the results are similar to experiments with linear decision boundaries.

\begin{figure}
\begin{center}
\includegraphics[width = 0.48\linewidth]{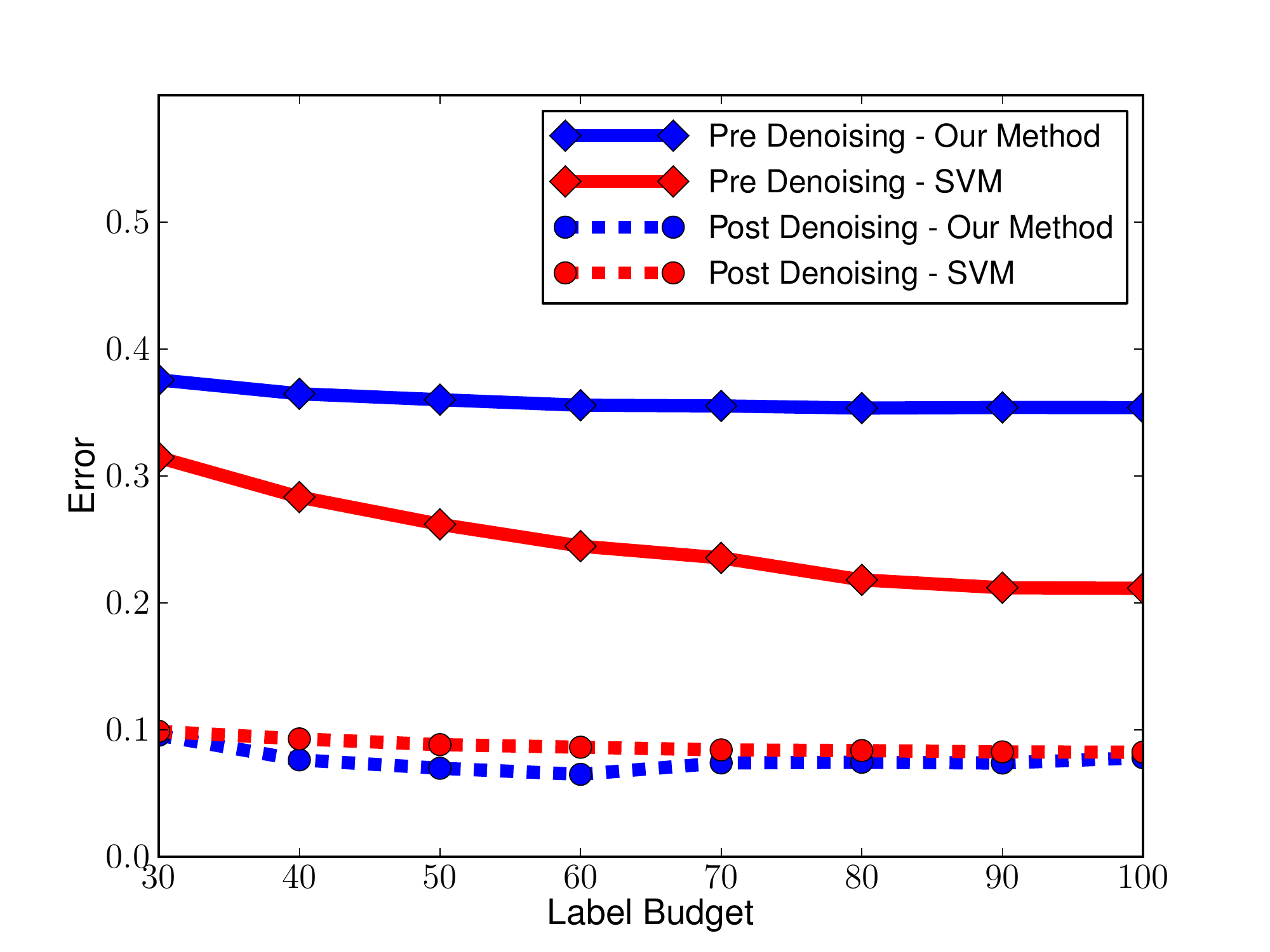}
\includegraphics[width = 0.48\linewidth]{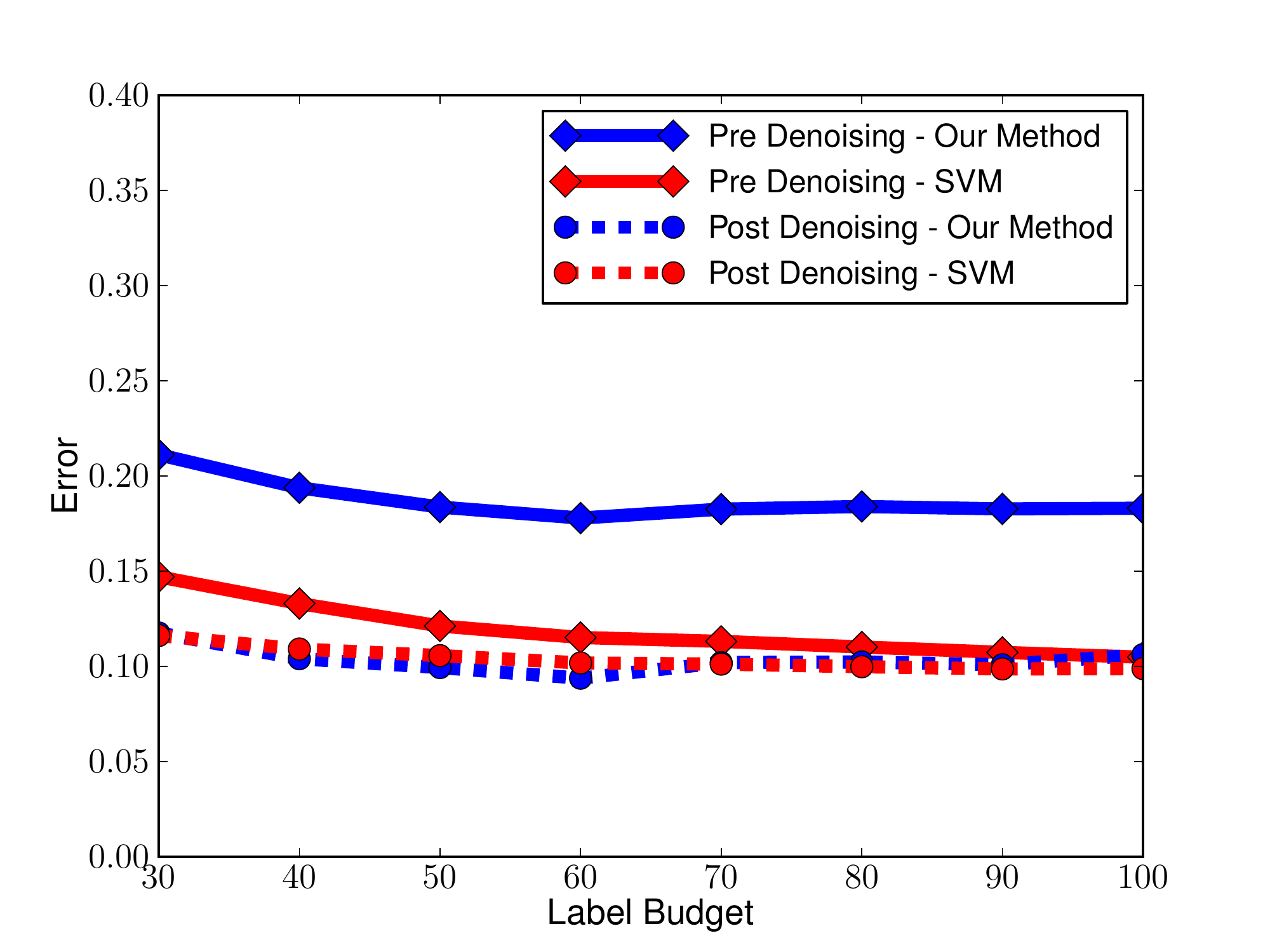}
\caption{Accuracy with Gaussian kernels in different noise scenarios. Accuracy in y-axis and Labels used in x-axis. From Left to Right - Random Noise, and Pockets of Noise.}
\end{center}
\end{figure}

\end{document}